\documentclass[11pt]{article}

\usepackage[final]{acl}

\usepackage{times}
\usepackage{latexsym}

\usepackage[T1]{fontenc}

\usepackage[utf8]{inputenc}

\usepackage{microtype}

\usepackage{inconsolata}

\usepackage{graphicx}

\IfFileExists{fontawesome5.sty}{%
  \usepackage{fontawesome5}%
}{%
  \providecommand{\faGithub}{}%
  \providecommand{\faGlobe}{}%
}

\usepackage{amsmath}
\usepackage{amsthm}
\usepackage{amssymb}
\usepackage{booktabs}
\usepackage{multirow}
\usepackage[breakable,listings]{tcolorbox}
\newtcolorbox{promptbox}[1]{
  colback=gray!10,
  colframe=gray!50,
  title=#1,
  halign title=center,
  breakable,
  fontupper=\scriptsize
}
\newtcblisting{dialoguebox}{
  listing only,
  colback=gray!10,
  colframe=gray!50,
  breakable,
  listing options={
    basicstyle=\scriptsize\ttfamily,
    breaklines=true,
    breakatwhitespace=true,
    columns=fullflexible,
    keepspaces=true,
    aboveskip=0pt,
    belowskip=0pt
  }
}

\newtheorem{definition}{Definition}
\newtheorem{proposition}{Proposition}

\newcommand{\tax}{collaboration tax}

\title{The Collaboration Tax: \\How Much LLM Multi-Agent Systems Pay to Coordinate}

\author{
    \textbf{Weixiang Sun}$^{1}$\;
    \textbf{Zehong Wang}$^{1,\dagger}$\;
    \textbf{Hong Huang}$^{2,3}$\;
    \textbf{Colby Nelson}$^{1}$\;
    \textbf{Yijun Ma}$^{1}$\;
    \textbf{Yanfang Ye}$^{1,\dagger}$ 
    \\
    \textsuperscript{1} University of Notre Dame\; 
    \textsuperscript{2} Meta Superintelligence Labs\;
    \textsuperscript{3} Simon Fraser University\; 
    \\
    $^\dagger$ Corresponding Authors
    \\
    \texttt{<wsun4,zwang43,yye7>@nd.edu}
    \\[2pt]
    \href{https://github.com/Weixiang-Sun/collaboration-tax}{\faGithub~Code}\quad
    \href{https://weixiang-sun.github.io/collaboration-tax/}{\faGlobe~Website}
}

\begin{document}
\maketitle

\begin{abstract}
Multi-agent systems built from large language models are deployed widely, yet how much performance is lost when two LLMs must coordinate rather than act alone remains unclear. We formulate the \emph{\tax} as the team-decentralisation loss of a two-player cooperative game with private information, with two propositions characterising its sign and its equivalence to a max-superadditivity violation. We operationalise this definition on $32$ solo-tractable tasks grouped by source of grounding friction and measure it on $11$ models from $7$ providers. The tax is structured along two no-exception axes: a category ordering across every model and a monotonic decrease with capability. The proximate mechanism is not a reasoning deficit but a four-stage conversational cascade in which agents make ungrounded claims, fail to query the partner, skip integrating both views, and accept the answer without re-derivation. The tax is mechanically predictable from conversation features and partly tractable: a prompt intervention targeting all four stages closes a substantial fraction of the gap, with the dominant bottleneck differing across categories. In heterogeneous pairs the tax is pulled toward the stronger partner rather than the additive midpoint, empirically realising the max-superadditivity violation predicted by our framework. Together these results recast collaboration in LLM systems as a measurable, predictable, and partly tractable cost.
\end{abstract}

\section{Introduction}
\label{sec:intro}

\begin{figure}[t]
    \centering
    \includegraphics[width=\linewidth]{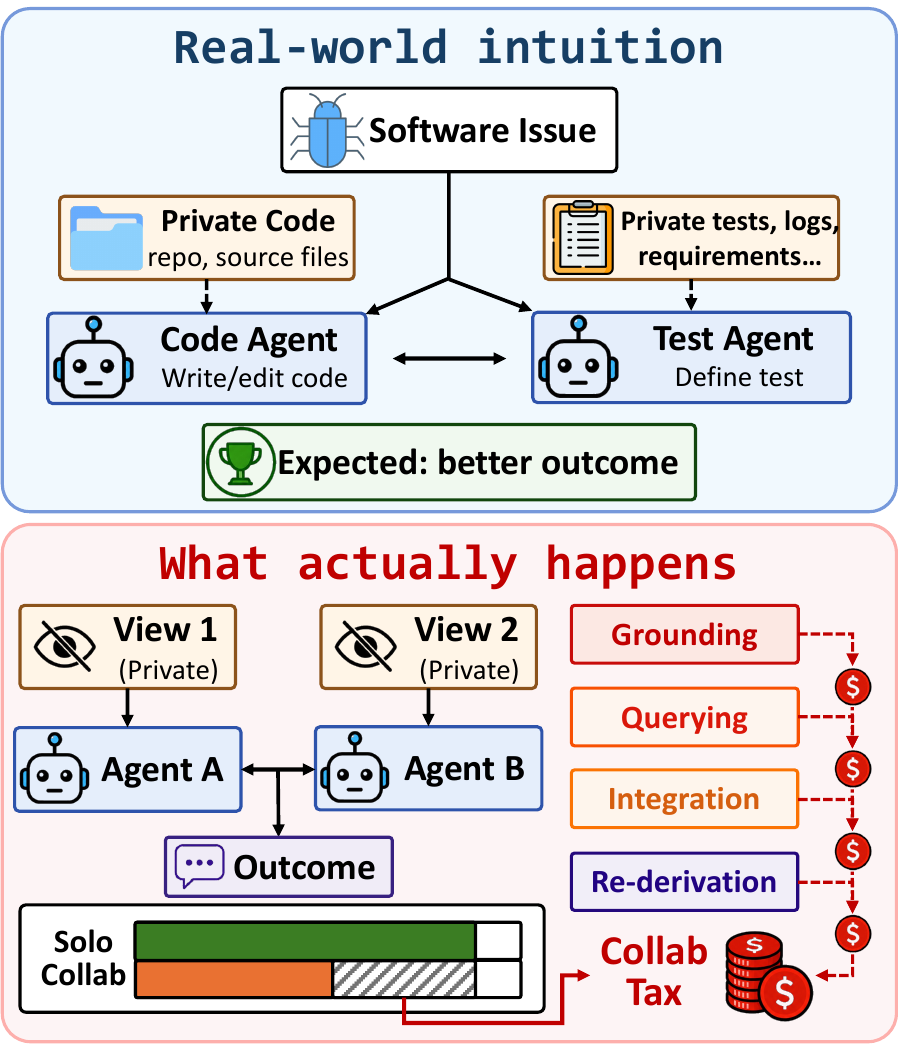}
    \caption{\textbf{The collaboration tax.} Two LLM agents with private views on a shared problem (top) are expected to outperform either alone; in practice, paired execution underperforms the solo upper bound by a \emph{collaboration tax} that decomposes into four conversational stages: grounding, querying, integration, and re-derivation (bottom).}
    \label{fig:teaser}
\end{figure}

Multi-agent systems built from large language models have become a fast-growing line of work \citep{guo2024large,ye2025llms4all} and now drive real applications across software engineering, planning, social simulation, and inference-time debate \citep{qian2024chatdev,hong2024metagpt,wu2024autogen,li2023camel,yuan2024mora,park2023generative,huang2024social,sun2026prescam,du2024improving,wang2026reasoning}. These deployments treat collaboration as a free primitive: assemble enough capable models, give them clear roles, and the team will outperform any single member. The premise is rarely tested directly. As the unit of deployment shifts from a single LLM to a collection of communicating ones, a basic empirical question shifts with it: when two or more LLMs must coordinate to solve a problem they could each handle alone, do they actually succeed together, and if not, why?

A small set of recent benchmarks has begun to probe this question directly \citep{davidson2025collaboration,eisenstein2026mt,sun2025collab,liu2024agentbench,yadav2026more,cemri2026multi}, but each measurement is local to one task domain or one pair configuration. The literature still lacks a structured, model- and task-agnostic account of coordination cost itself, with a known sign and a decomposition across task types and pair configurations (Section~\ref{sec:related} discusses each line in detail).

Three open questions structure our investigation. \textbf{First, is there a systematic coordination cost when LLMs are paired, and what governs its size?} The question asks whether the collaboration gap is a structural property of paired LLM execution or an artefact of one task type, and if structural, whether its size is driven primarily by model capability, by task structure, or by their interaction. \textbf{Second, when failure occurs, where in the conversation does it originate?} The breakdown could lie in reasoning (the model cannot compute the answer), in information access (the model lacks the relevant facts), in inter-agent communication (the facts exist but go unshared), or in verification (the receiver accepts a wrong answer); each candidate location implies a different intervention. \textbf{Third, can the cost be reduced without retraining?} This question separates structural conversational failures, which prompt-level changes can plausibly target, from intrinsic capability failures, which they cannot. We address all three.

To address these questions we formulate the \tax\ as a task-agnostic quantity anchored in the team-decentralisation framework of \citet{marschak1955elements} and the cooperative game theory of \citet{shapley1953value}, with two Propositions that characterise its sign and tie its positivity to a structural failure of the underlying cooperative game (Section~\ref{sec:tax}). We operationalise this definition on a $32$-task suite grouped into three families chosen to span three distinct sources of grounding friction in the sense of \citet{clark1991grounding,brennan1996conceptual,pickering2004toward}: Spatial (reference-frame alignment), Relational (lexical and category alignment), and CSP (indexing and ordinal alignment). Every task is solo-tractable by design, so the gap reflects coordination rather than problem-solving capacity. We evaluate $11$ models from $7$ providers in solo, homogeneous, and heterogeneous configurations.

We find that the \tax\ is not a single number but a structured, mechanistic phenomenon. The cost has two no-exception orderings, one over task categories and one over model capability, and is mechanically predictable from conversation-shape features alone, with capability setting the intercept of the gap and conversation shape its slope. The proximate failure mechanism is a \emph{four-stage cascade}, not a reasoning failure: from \emph{grounding}, where agents fabricate facts not stated by either view, to \emph{querying}, where they fail to ask the partner for the facts they do not have, to \emph{integration}, where they skip combining the two views before committing, and finally to \emph{re-derivation}, where the receiver accepts the answer without recomputing it. The mechanism extends to heterogeneous pairs: the cost is asymmetrically borne by the stronger partner, while the cascade dimensions continue to discriminate failure from success.

Building on this cascade insight, we design a prompt-level combined intervention that appends one stage-targeted clause for each of the four stages to the system prompt. The combined intervention closes a substantial fraction of the tax; per-stage leave-one-out ablations show that the dominant bottleneck differs across categories. The simplicity of the fix is exactly the point: failures attributed to reasoning or capability cannot be patched this cheaply, but failures of grounding, querying, integration, and re-derivation can.

Our contributions are as follows:
\begin{itemize}
  \item \textbf{A formal definition of the \tax} as the success-rate gap between full-information solo and split-view paired execution, with a cooperative-game interpretation that isolates coordination cost from problem-solving capacity.
  \item \textbf{A structurally categorised task suite} of $32$ solo-tractable tasks across Spatial, Relational, and CSP families, evaluated on $11$ models from $7$ families in solo, homogeneous, and slot-swapped heterogeneous configurations.
  \item \textbf{Mechanistic insights into the \tax}: a 2D landscape over capability and category, a four-stage conversational cascade (grounding, querying, integration, re-derivation) rather than a reasoning failure, and a regression decomposition in which capability sets the intercept and conversation shape sets the slope.
  \item \textbf{Stage-targeted prompt interventions that partially close the gap}, with a leave-one-out ablation showing that the dominant bottleneck layer differs across categories.
\end{itemize}

\section{Related Work}
\label{sec:related}

\paragraph{LLM-based agent collaboration.}
The closest prior work is \citet{davidson2025collaboration}, who introduce the \emph{collaboration gap} on a split-view maze and show that a strong-primer relay recovers most of it. Related benchmarks probe coordination from other angles: turn-count effects \citep{eisenstein2026mt}, Overcooked-style cooperation \citep{sun2025collab}, single-agent agentic tasks \citep{liu2024agentbench}, planner-with-rule-based-partner pairs \citep{zhang2024building}, zero-cost cooperation games \citep{yadav2026more}, and failure-mode taxonomies \citep{cemri2026multi}. A parallel engineering line orchestrates LLMs as cooperating agents \citep{wu2024autogen,hong2024metagpt,li2023camel,qian2024chatdev} and uses debate to improve reasoning \citep{du2024improving}; theory-of-mind probes \citep{kosinski2024evaluating,strachan2024testing} and cooperative reinforcement learning \citep{zhou2025sweet} examine related capacities without a structural account of where coordination fails. The classical theory of grounding \citep{clark1991grounding,clark1996using,levinson2003space} motivates the multiple equivalent representations our tasks exploit as natural friction. Each prior thread uses either a single task type or a homogeneous pair; our $32$-task suite across three structural families and a heterogeneous-pair matrix supplies the structured diagnostic these threads lack.

\paragraph{Cooperative games with asymmetric information.}
Our split-view setup inherits a longer tradition of two-player asymmetric-information cooperation. \citet{bard2020hanabi} formalise Hanabi as a benchmark for partial-observability cooperation, and recent neural agents \citep{v2025generalist} pursue zero-shot coordination across unseen partners. Overcooked-AI \citep{carroll2019utility} adds embodied real-time cooperation between learned policies and human surrogates, and textual referent games such as OneCommon \citep{udagawa2019natural} test grounding under privately-held continuous context. In dialogue, \citet{li2025grounded} annotate misunderstandings in MapTask transcripts to study how grounding breaks down in human dyads. Across all of these, the underlying problem is non-trivial even with full information, so coordination cost is entangled with problem-solving difficulty, and most use binary success metrics that wash out the continuous signal needed to distinguish near-misses from catastrophic failures. Our tasks are instead deliberately \emph{solo-trivial} and graded on a continuous $[0,1]$ scale, so the gap between solo and collaborative performance can be attributed to coordination rather than to a model's underlying ability to solve the task, and partial progress is preserved as a graded outcome.

\section{The Collaboration Tax}
\label{sec:tax}

To compare coordination cost across tasks, models, and pair configurations, and to know what the resulting quantity means when it is zero, positive, or transferred between dyads, we need a task-agnostic definition with a known sign and a structural interpretation rather than a number reported in isolation on a single task. \citet{davidson2025collaboration} make the cost explicit on a split-view maze by comparing one LLM given the full instance to two copies coordinating over complementary halves; we generalise this construction to any task admitting a union-necessary partition and a continuous deterministic grader, and anchor the resulting quantity as the team-decentralisation loss of \citet{marschak1955elements} for a two-player cooperative game with private information \citep{shapley1953value}. Two Propositions, stated and proved in Appendix~\ref{app:theory}, pin down its structure: \emph{information dominance} guarantees that the \tax\ is non-negative whenever the paired protocol's policy is dominated by the optimal centralised policy, explaining why it is positive on the vast majority of cells we measure (Section~\ref{sec:landscape} reports the small set of empirical negative cases); and \emph{max-superadditivity equivalence} states that a positive \tax\ is exactly the failure of the underlying cooperative game to satisfy $v(\{1, 2\}) \geq \max(v(\{1\}), v(\{2\}))$, supplying the cooperative-game vocabulary for the asymmetric pair effects of Section~\ref{sec:hetero}.

\paragraph{Operational form.}
For a task $T$ with instances $x$ scored by a deterministic grader $U \in [0,1]$, and a union-necessary partition $x = v_1(x) \cup v_2(x)$ such that neither view alone determines the answer, the homogeneous tax of model $M$ is
\begin{equation}
\widehat{\mathrm{tax}}(M, T) \;=\; s_{\text{solo-full}}(M, T) \;-\; s_{\text{homo}}(M, T),
\label{eq:tax}
\end{equation}
where $s_{\text{solo-full}}$ is the mean score of $M$ given the merged instance and $s_{\text{homo}}$ is the mean score of two copies of $M$ given $v_1$ and $v_2$ exchanging messages until termination, each averaged over $50$ rollouts with independent seeds. We additionally report the ratio version $\widehat{\mathrm{tax}}/s_{\text{solo-full}}$ to normalise across tasks with different solo ceilings. The heterogeneous case substitutes the stronger member's solo score for the first term.

\section{Tasks and Protocols}
\label{sec:tasks}

\subsection{Design Principles for the Task Suite}
\label{sec:tasks:principles}

For Equation~\ref{eq:tax} to cleanly measure coordination cost, every task in the suite satisfies four properties, three of which we inherit from \citet{davidson2025collaboration}. \textbf{Solo-trivial:} with the full instance a single agent should solve the task at a high rate, so that $s_{\text{solo-full}}$ is near ceiling and the gap reflects coordination cost rather than problem-solving capacity. \textbf{Union-necessary:} each instance is partitioned into views $v_1, v_2$ with $v_1 \cup v_2 = x$ and neither view alone admits the canonical answer. \textbf{Algorithmically verifiable:} ground truth is computed by a deterministic procedure (BFS for shortest paths, topological sort for orderings, a SAT-style solver for constraint puzzles) yielding a continuous score in $[0,1]$, with path-task graders enumerating all sixteen origin / orientation / axis-order schemes following \citet{davidson2025collaboration}. \textbf{Multiply expressible:} the same content admits several equivalent surface representations (coordinate origins, axis orderings, naming conventions, ordinal directions, relational vocabularies), providing the grounding friction we aim to measure \citep{clark1991grounding,levinson2003space}. Each principle is required by Definition~\ref{def:tax} or Proposition~\ref{prop:nonneg}: without it the tax either degenerates to zero by construction, becomes ill-defined as a population quantity, or mixes coordination cost with solo problem-solving capacity. Appendix~\ref{app:theory:principles} formalises each requirement.

\subsection{Task Families}
\label{sec:tasks:families}

The suite is organised into three families, chosen for three reasons. First, they span the principal answer structures that LLMs are asked to produce in multi-agent deployments: sequences, relational queries, and constraint-satisfying assignments. Second, they cover three distinct sources of grounding friction in the sense of Section~\ref{sec:tasks:principles}: reference-frame alignment (Spatial), lexical and category alignment in the sense of \citet{brennan1996conceptual,pickering2004toward} (Relational), and indexing and ordinal alignment (CSP). Third, they expose different error-propagation patterns: a single misaligned step invalidates the rest of a path \citep{wang2026reasoning}, relational query errors stay local to their query, and constraint violations cascade through the assignment. Together they let us ask whether collaboration ability is a single capacity or a profile that varies with task structure, and whether per-family bottlenecks differ.
The full per-task answer types, grader rules, and coverage within each family are reported in Appendix~\ref{app:tasks}.

\begin{figure*}[t]
  \centering
  \includegraphics[width=\textwidth]{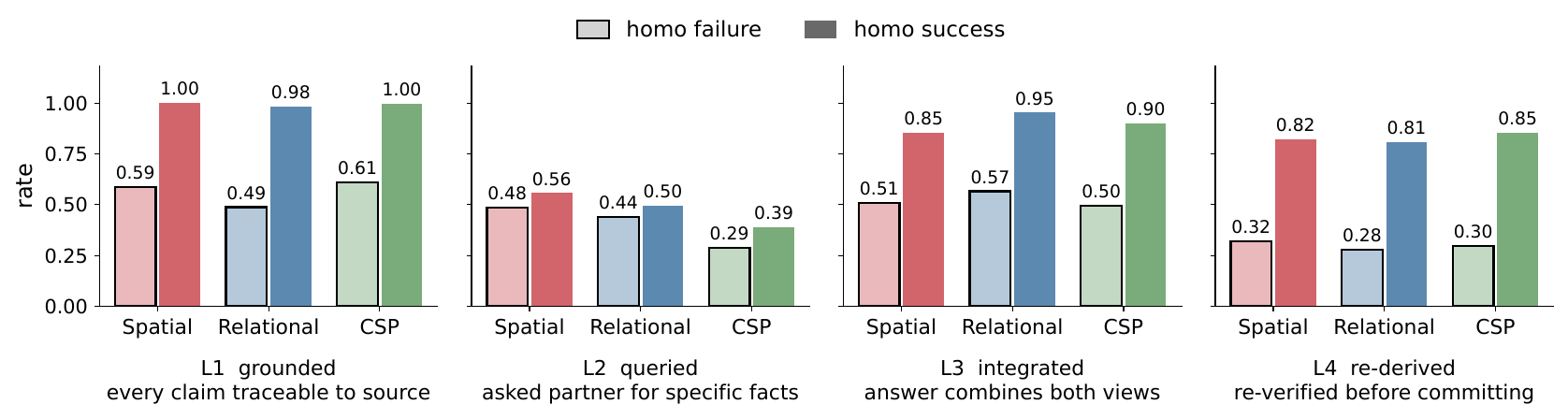}
  \caption{\textbf{Four-stage cascade by outcome and category.} Each panel reports the rate at which homogeneous failures (lighter bars) and successes (saturated bars) satisfy one of the four cascade dimensions, separated by task category. \emph{L1 grounded}: every claim is traceable to a stated source. \emph{L2 queried}: at least one agent asked the partner a specific factual question. \emph{L3 integrated}: the decisive claim was preceded by an explicit combined-state message. \emph{L4 re-derived}: the receiving agent showed recomputation work before either agent emitted \texttt{ACTI!}. Every stage discriminates failures from successes.}
  \label{fig:mechanism}
\end{figure*}

\subsection{Solo and Collaborative Modes}
\label{sec:protocols:modes}

We evaluate each task instance under four canonical modes. The first two are single-agent controls; the latter two are two-agent collaborations. In every mode the final answer is scored against the deterministic ground truth (pipeline described under Metrics below).

\paragraph{Solo control.} In \emph{solo-full} a single agent receives the merged instance $v_1 \cup v_2$ and produces a candidate answer in one completion. The candidate is then returned to the same model for a critic pass that may revise it before grading. The critic step pushes the tax toward pure coordination cost rather than a turn-count asymmetry between solo and collaboration: the collaboration mode receives multiple rounds of partner messages, each effectively acting as another pass over the answer, so giving solo a single revision opportunity matches that structural affordance. Empirically the critic contributes a small fraction of the solo score (Appendix~\ref{app:critic}), well below the gap to collaboration on the same model. Solo-full is the coordination-free upper bound and serves as the denominator of the gap.

\paragraph{Homogeneous collaboration.} Two instances of the same model, instantiated with independent contexts, receive $v_1$ and $v_2$ respectively. They exchange messages under a shared system prompt that explains the split-view protocol; each message is prefixed with \texttt{[other agent]:} when delivered to its counterpart. The dialogue continues until either agent emits the termination marker \texttt{ACTI!} together with a candidate answer, or a turn cap of $50$ exchanges is reached. The candidate is then passed to the grader. 

\paragraph{Heterogeneous collaboration.} Identical to homogeneous, except that the two agent roles are filled by different models (or by different checkpoints of the same family). This mode is the basis of the pair-composition analysis in Section~\ref{sec:hetero}.

\paragraph{Metrics.} For each (task, mode, model or pair) we run $50$ rollouts with independent seeds and report the mean continuous score $s \in [0, 1]$. The LLM never assigns a score itself; it only translates free-form text into the structured answer that the deterministic program then scores (per-task grader rules in Appendix~\ref{app:tasks}). The \tax\ of Equation~\ref{eq:tax} and its ratio variant are over the solo-full and homogeneous modes; the heterogeneous mode enters the analysis in Section~\ref{sec:hetero}.

\section{Experiment}
\label{sec:experiment}

\subsection{Experimental Setup}
\label{sec:setup}

We evaluate eleven models from seven providers: OpenAI (\texttt{gpt-5}, \texttt{gpt-5-nano}, \texttt{gpt-4.1-mini}, \texttt{gpt-4.1-nano}, \texttt{gpt-4o-mini}), Anthropic (\texttt{claude-sonnet-4-5}), Google (\texttt{gemini-2.5-flash-lite}), DeepSeek (\texttt{DeepSeek-V4-Pro}), and three open-weight models hosted through API endpoints: \texttt{Llama-4-Maverick} (Meta, mixture-of-experts), \texttt{Phi-4} (Microsoft), and \texttt{Qwen3-8B} (Alibaba). The set spans a wide capability range while limiting redundancy within any single family. All models are queried through OpenAI-compatible chat APIs with no fine-tuning, no parameter access, and no sampling control beyond temperature. Detailed hyperparameters (temperatures, rollout count, turn cap) are reported in Appendix~\ref{app:hparams}.

\subsection{The Gap Landscape}
\label{sec:landscape}
\label{sec:results}

\paragraph{The collaboration tax is structured by both model capability and task type.}
As shown in Figure~\ref{fig:landscape}, two patterns hold without exception across the eleven models. Within every row, the ordering is Spatial $\succ$ Relational $\succ$ CSP: spatial-coordination tasks lose the most from collaboration, relational queries lose less, and constraint-satisfaction tasks lose least. Across rows, the gap scales monotonically with model capability: the weakest models lose roughly half of their solo success to coordination, while the top-tier models are also affected. The three weakest rows come from three different model families, so the capability ordering is not a family-style artefact.

\begin{figure}[t]
  \centering
  \includegraphics[width=\columnwidth]{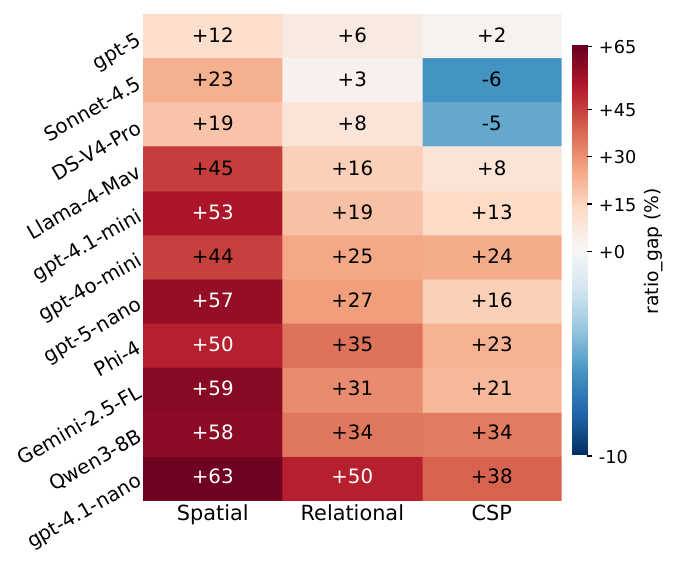}
  \caption{\textbf{Homogeneous ratio gap by model and category.} Each cell shows the mean ratio gap (in percentage points) for one (model, category) pair, with rows sorted by each model's overall mean. The gap is structured by both model capability and task category.}
  \label{fig:landscape}
\end{figure}

A per-task decomposition (Figure~\ref{fig:taskrank}) is reported in Appendix~\ref{app:results}.

\begin{figure}[t]
  \centering
  \includegraphics[width=\columnwidth]{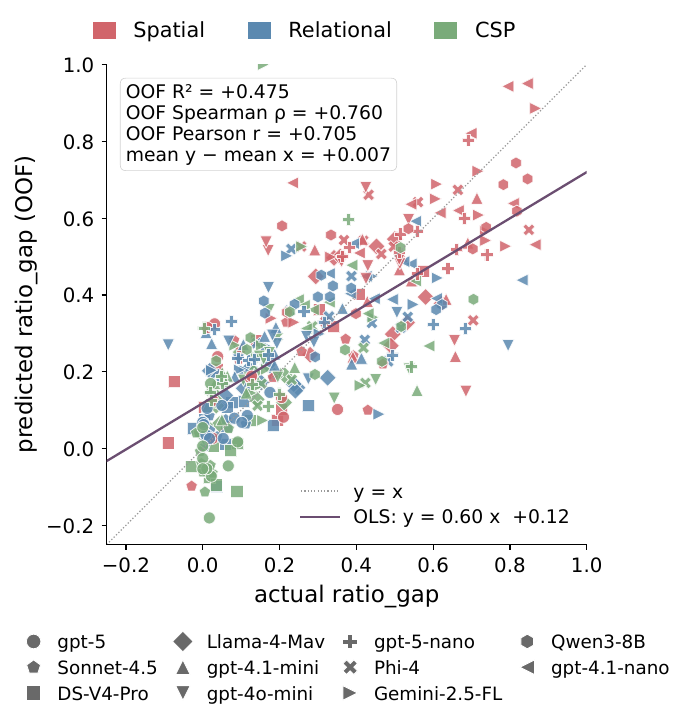}
  \caption{\textbf{Out-of-fold predicted vs actual ratio gap.} The M1 ridge regression fit on (model, task) cells with folds grouped by task. Out-of-fold $R^{2} = 0.475$, Spearman $\rho = 0.760$, Pearson $r = 0.705$.}
  \label{fig:predict}
\end{figure}

\subsection{Mechanism: A Four-Stage Cascade}
\label{sec:mechanism}

We next ask \emph{why} the gap arises. From $700$ open-ended LLM failure descriptions clustered under a neutral prompt and an anti-bias naming rule (Appendix~\ref{app:clusters}), we extract $16$ behaviourally specific themes and group them along the conversational-grounding pipeline of \citet{clark1991grounding,clark1996using}: claims (L1 Grounding), queries (L2 Querying), uptake (L3 Integration), and acknowledgment (L4 Re-derivation). Each stage is operationalised as a binary judge dimension applied to every homogeneous rollout. The four stages, from earliest to latest:

\paragraph{L1 Grounding.} A claim is grounded if it can be traced to information stated by either agent. Panel L1 of Figure~\ref{fig:mechanism} shows that grounding is the cleanest single-feature fail/success discriminator in the judge: successful rollouts are grounded in essentially every category, while a substantial fraction of failures contain at least one ungrounded claim. It also acts as a strict necessary condition: once an agent fabricates a value not stated by either view, no downstream stage can recover.

\paragraph{L2 Querying.} A pair queries iff at least one agent makes a specific factual request of the partner (\emph{``what is the value of node $K$?''}). Querying discriminates failure from success across all three categories (panel L2 of Figure~\ref{fig:mechanism}), with the largest gap on CSP, where successful pairs explicitly elicit cross-half capacity and constraint facts that failed pairs leave latent. Spatial and Relational pairs show smaller but non-zero querying gaps, because much information already arrives passively when agents dump their views. L2 is also the most independent of the four dimensions, the layer most likely to fire alone in a failed rollout.

\paragraph{L3 Integration.} A pair integrates iff the decisive claim is preceded by an explicit combined-state message that lists facts from both views and any derived consequences. Panel L3 of Figure~\ref{fig:mechanism} shows that integration is the strongest single-variable predictor of the \tax\ and the only stage whose marginal contribution to a multi-feature regression is positive. The effect peaks on Spatial, where path tasks carry persistent state and a single missed integration corrupts all subsequent moves.

\begin{table*}[t]
\centering
\small
\resizebox{\textwidth}{!}{%
\begin{tabular}{lcccccccc}
\toprule
& \multicolumn{2}{c}{Spatial} & \multicolumn{2}{c}{Relational} & \multicolumn{2}{c}{CSP} & \multicolumn{2}{c}{Overall} \\
\cmidrule(lr){2-3} \cmidrule(lr){4-5} \cmidrule(lr){6-7} \cmidrule(lr){8-9}
Condition & $\Delta s_{\text{homo}}$ & \% closed & $\Delta s_{\text{homo}}$ & \% closed & $\Delta s_{\text{homo}}$ & \% closed & $\Delta s_{\text{homo}}$ & \% closed \\
\midrule
All four & $+0.051_{\pm .026}$ & $+27.1_{\pm 12.3}$ & $\mathbf{+0.072}_{\pm .029}$ & $\mathbf{+41.4}_{\pm 38.0}$ & $+0.111_{\pm .074}$ & $+52.5_{\pm 32.3}$ & $\mathbf{+0.079}_{\pm .031}$ & $\mathbf{+38.3}_{\pm 14.1}$ \\
no L1    & $\mathbf{+0.066}_{\pm .027}$ & $\mathbf{+37.3}_{\pm 16.2}$ & $-0.005_{\pm .038}$ & $-2.4_{\pm 25.4}$ & $+0.093_{\pm .065}$ & $+45.6_{\pm 35.3}$ & $+0.051_{\pm .033}$ & $+27.7_{\pm 15.1}$ \\
no L2    & $+0.034_{\pm .048}$ & $+12.1_{\pm 26.0}$ & $+0.030_{\pm .030}$ & $+18.4_{\pm 21.2}$ & $+0.055_{\pm .028}$ & $+28.2_{\pm 20.5}$ & $+0.040_{\pm .022}$ & $+18.4_{\pm 13.4}$ \\
no L3    & $+0.059_{\pm .018}$ & $+30.3_{\pm 9.9}$ & $+0.044_{\pm .042}$ & $+27.5_{\pm 36.8}$ & $\mathbf{+0.119}_{\pm .078}$ & $\mathbf{+55.2}_{\pm 22.1}$ & $+0.075_{\pm .035}$ & $+36.3_{\pm 13.5}$ \\
no L4    & $-0.004_{\pm .027}$ & $+2.3_{\pm 12.5}$ & $+0.015_{\pm .035}$ & $+2.8_{\pm 37.5}$ & $+0.064_{\pm .063}$ & $+29.4_{\pm 25.6}$ & $+0.027_{\pm .030}$ & $+9.9_{\pm 13.4}$ \\
\bottomrule
\end{tabular}%
}
\caption{\textbf{Stage-targeted prompt-intervention ablation.} The \emph{all four} condition adds the L1 grounding, L2 query, L3 integration, and L4 re-derivation clauses to the system prompt; the four \emph{no~L$k$} rows are leave-one-out variants that drop the clause for layer $k$. $\Delta s_{\text{homo}}$ is the absolute lift in the homogeneous success rate against the no-intervention baseline; \emph{\% closed} is the fraction of the original tax that the intervention recovers. Subscripts give the half-width of the $95\%$ confidence interval. Bold marks the column-wise point-estimate maximum.}
\label{tab:intervention}
\end{table*}

\paragraph{L4 Re-derivation.} A pair re-derives iff the receiving agent shows actual recomputation work (re-walks the path, recomputes the sum, re-checks the constraints) before either agent wants to end. Re-derivation tracks the ratio gap almost as strongly as integration in absolute correlation, but its marginal contribution beyond integration is essentially zero: failures of L3 and L4 co-occur in the majority of failures, since the receiver cannot easily re-derive when the proposer never integrated. Whether this reflects redundancy or observational entanglement is tested by the intervention in Section~\ref{sec:intervention}.

\paragraph{Cascade structure.} The four stages are separable but not independent: in most failed rollouts at least two stages fire simultaneously, and L3 and L4 are entangled in baseline data, with full co-occurrence statistics in Appendix~\ref{app:cascade} and one illustrative single-stage failure per dimension in Appendix~\ref{app:cases}. To validate the judge labels we re-annotated $100$ stratified homogeneous rollouts with four independent expert annotators under a shared guideline: inter-rater agreement is moderate to substantial across all four stages (Fleiss' $\kappa$ in $[0.40, 0.69]$), and agreement between the automated judge and the expert majority is substantial on L1, L2, and L3 (Cohen's $\kappa$ in $[0.67, 0.73]$) and moderate on L4 ($\kappa = 0.51$) (Appendix~\ref{app:judge}). We test the implied prediction, that stage-targeted prompt interventions produce a separable lift per stage, in Section~\ref{sec:intervention}.

\subsection{Predicting the Gap}
\label{sec:predict}

If the gap is driven by mechanical conversational behaviours, it should be predictable from those behaviours. We fit a ridge regression (Ridge, $\lambda=1$, inputs standardised) of the per-cell ratio gap on a panel of conversation-structural features (sample size, length, agreement and disagreement marker density, view-disclosure ratios, coordinate-token density, and turn-pacing statistics) together with model and category dummies. To prevent leakage, folds are grouped by task: every held-out fold contains tasks not seen during fitting.

\paragraph{The collaboration tax is mechanically predictable from conversation features.}
As shown in Figure~\ref{fig:predict}, the regression achieves a substantial out-of-fold $R^{2}$ across the (model, task) cells, with strong held-out rank correlation throughout. Adding the cascade judge labels to the regression does not meaningfully improve the fit beyond integration (L3), indicating that the structural features already encode most of the predictive signal carried by the judge labels and that L4 collinearity with L3 leaves little additional variance to explain.

\paragraph{Capability sets the intercept; conversation shape sets the slope.}
The regression generalises across tasks: a leave-one-task-out evaluation, in which every task is held out in turn while fitting on the rest, retains positive held-out variance explained and strong rank correlation, with the majority of held-out tasks individually positive. By contrast, the regression cannot extrapolate the absolute gap level to a held-out model: a leave-one-model-out variant preserves the rank ordering of cells but not their absolute level. The coefficients in Appendix Figure~\ref{fig:forest} show that the largest positive contributors are model dummies for the weakest models, while the largest negative contributors are the CSP dummy and conversation-shape features. We read this as a two-component decomposition of the gap: a model-level intercept set by base capability and a slope along conversation-shape features shared across models.

\subsection{Intervention: Stage-Targeted Prompt Clauses}
\label{sec:intervention}

If the four cascade stages of Section~\ref{sec:mechanism} are genuine mechanisms rather than correlates of failure, then prompt clauses targeted at each stage should produce a measurable lift in the homogeneous success rate, and the relative magnitude across stages should track the per-category fail rates of Section~\ref{sec:mechanism}. We test five conditions on top of the homogeneous baseline of Section~\ref{sec:setup}: an \emph{all-four} condition that appends a stage-specific clause for every layer to the system prompt (a grounding clause for L1, a query mandate for L2, an integration block for L3, and a re-derivation requirement for L4), and four leave-one-out variants \emph{no~L$k$} that drop the $k$-th clause while keeping the other three. The four clauses are listed verbatim in Appendix~\ref{app:prompts}.

\paragraph{The combined intervention recovers a substantial fraction of the tax.}
As shown in Table~\ref{tab:intervention} (last column), the all-four condition lifts homogeneous success against the no-intervention baseline, with the $95\%$ confidence interval bounded well above zero on every category. Per-category responsiveness follows the predictive signal of Section~\ref{sec:predict}: CSP responds most, Relational next, and Spatial least. No condition reaches the solo ceiling, but a single change to the system prompt recovers a substantial fraction of the entire collaboration tax across the suite, with no retraining and no change to the underlying model.

\paragraph{Each category is bottlenecked by a different cascade layer.}
Reading down a column of leave-one-out values isolates the marginal contribution of the dropped clause. As Table~\ref{tab:intervention} shows, dropping L4 on Spatial, L1 on Relational, or L2 on CSP each substantially reduces the lift in the respective category, with the $95\%$ confidence interval crossing zero for \emph{no~L4} on Spatial and for \emph{no~L1} on Relational; the critical layer differs across categories. The Spatial result is the strongest evidence that L4 is causally separable from L3: in the observational data of Section~\ref{sec:mechanism} the two co-occur and L4 carries no separable regression weight beyond L3, but once the L3 clause externally enforces integration, L4 carries the Spatial bottleneck on its own. In two cases, \emph{no~L1} on Spatial and \emph{no~L3} on CSP, the targeted leave-one-out point estimate exceeds the all-four point estimate, though the confidence intervals overlap heavily, indicating at most nominal headroom for category-selective prompts over the universal one.

\begin{figure}[t]
  \centering
  \includegraphics[width=\columnwidth]{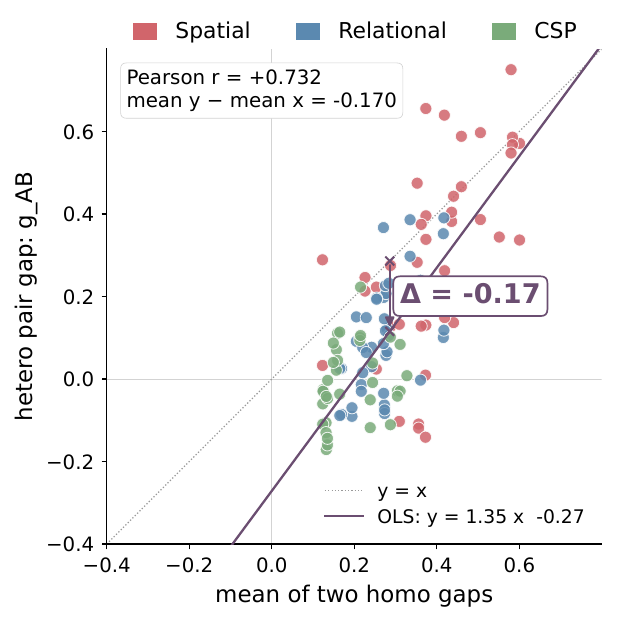}
  \caption{\textbf{Additive-null scatter for heterogeneous pairs.} Each point is one (pair, task) cell: $x$ is the average of the two participating models' homogeneous gaps; $y$ is the actual hetero gap. The OLS fit ($y = 1.35x - 0.27$) and the $y = x$ reference line are shown. Hetero gaps fall below the additive midpoint.}
  \label{fig:additivenull}
\end{figure}

\section{Heterogeneous-Pair Matrix}
\label{sec:hetero}

We now ask whether the same gap structure holds when the two agents are different models. Running the full $11 \times 11$ pair matrix is too costly for the question we care about, so we instead pick two strong-weak contrasts that hold the capability gap roughly fixed while varying family identity. Both contrasts use \texttt{gpt-4.1-nano} as the weak slot: one pairs it with \texttt{claude-sonnet-4-5} (cross-family, Anthropic and OpenAI), the other with \texttt{gpt-5} (same-family, both OpenAI). For each pair we run agent-1 / agent-2 in both orderings, giving four heterogeneous configurations on the same task suite. With only two pairs and four configurations, our heterogeneous results are reported as an existence proof of a qualitative pattern rather than a quantitative characterisation; we use them to probe one structural prediction of the cooperative-game framing of Section~\ref{sec:tax}.

\paragraph{The pair gap is pulled toward the stronger member, not toward the midpoint.}
Aggregating across (pair, task) cells (Figure~\ref{fig:additivenull}), the actual hetero ratio gap correlates strongly with the midpoint between the two individual homogeneous gaps but is systematically below it and well below the additive line $y = x$; the per-cell breakdown by (initiator, responder) is reported in Appendix Figure~\ref{fig:heteromat}, where the four off-diagonal heterogeneous cells cluster near the strong-tier diagonal rather than averaging between strong and weak. Equivalently, all four configurations violate \emph{max-superadditivity} \citep{shapley1953value} on the majority of tasks, and Table~\ref{tab:shapley} translates this into cooperative-game terms: every member's Shapley share falls below its singleton payoff, as Appendix~\ref{app:theory:hetero} formalises. Same-family and cross-family pairs are not visibly separated; with one pair per side, family-level effects are not detectable.
Two further robustness checks are reported in Appendix~\ref{app:results}.
\begin{table}[t]
\centering
\small
\resizebox{\columnwidth}{!}{%
\begin{tabular}{lccccc}
\toprule
Pair & $v(\{1\})$ & $v(\{2\})$ & $v(\{1{,}2\})$ & $\phi_{1}$ & $\phi_{2}$ \\
\midrule
\texttt{Sonnet} $\times$ \texttt{nano} & $0.869$ & $0.352$ & $0.731$ & $0.624$ & $0.107$ \\
\texttt{nano} $\times$ \texttt{Sonnet} & $0.352$ & $0.869$ & $0.732$ & $0.108$ & $0.624$ \\
\texttt{nano} $\times$ \texttt{gpt-5}  & $0.352$ & $0.932$ & $0.756$ & $0.088$ & $0.668$ \\
\texttt{gpt-5} $\times$ \texttt{nano}  & $0.932$ & $0.352$ & $0.706$ & $0.643$ & $0.063$ \\
\bottomrule
\end{tabular}%
}
\caption{\textbf{Cooperative-game values and Shapley allocations.} Characteristic-function values $v(\cdot)$ and Shapley values $\phi$ for the four heterogeneous pair configurations, averaged across tasks. Every pair violates max-superadditivity, and every member's Shapley value falls below its singleton payoff.}
\label{tab:shapley}
\end{table}

\section{Conclusion}
\label{sec:conclusion}

We have formulated the \emph{\tax} as the team-decentralisation loss of \citet{marschak1955elements} for two-player cooperative games with private information, and measured it across $32$ tasks and $11$ models. The tax is structured along two no-exception orderings (capability and task category), driven by a four-stage conversational cascade rather than a reasoning deficit, mechanically predictable from conversation features, partly tractable through a single stage-targeted prompt intervention, and pulled toward the stronger member in heterogeneous pairs. These results recast LLM coordination cost as a structured, predictable, and partly tractable phenomenon, and they suggest that practical multi-agent deployments can recover a substantial fraction of solo capacity through prompt-level changes alone, without retraining or pair-specific tuning.

\section*{Limitations}
\label{sec:limitations}

\paragraph{Two-agent only.} Our suite measures the \tax\ for dyadic pairs ($N=2$). Whether the four-stage cascade and the rank-survives, level-fails decomposition generalise to $N \geq 3$ multi-agent settings is unstudied; both the partition structure (now $N$ private views and $\binom{N}{2}$ pairwise channels) and the conversational dynamics (now a multi-party dialogue with floor management) change qualitatively beyond the dyadic case. We are extending the framework to multi-agent collaboration as ongoing work.

\paragraph{Synthetic tasks rather than deployment workloads.} Every task in our suite is a procedurally generated puzzle (grid, graph, constraint satisfaction). This isolates coordination mechanics from domain knowledge and makes the \tax\ cleanly attributable to grounding friction, but the resulting setting does not directly correspond to the multi-agent applications LLMs are increasingly deployed in: collaborative code editing, issue triage and resolution, multi-turn debugging, or interactive document drafting, where the answer space and the feedback signal are far less structured. Future work will design tasks closer to these production workloads (paired code-modification dialogues, issue-resolution pairs over a shared repository) and re-measure the \tax\ under those conditions.

\section*{Ethics Statement}
\label{sec:ethics}

This paper involves no human subjects, no scraping of user data, and no demographic targeting. All experiments use procedurally generated puzzle instances and commercial LLM APIs with no fine-tuning. Released artifacts (task generators, splitters, graders, prompts, and anonymised conversation transcripts) enable reproduction and follow-up analysis. The mechanistic insights into where multi-agent LLM coordination breaks down are intended to support more robust collaborative systems; we are not aware of dual-use risks specific to this work beyond those already present in publicly available LLM agent benchmarks.

\bibliography{custom}

\appendix

\section{Theoretical Setup}
\label{app:theory}

This appendix gives the formal Definition of the \tax\ summarised in Section~\ref{sec:tax}, the cooperative-game formalisation underlying it, and proofs of the two Propositions invoked there, plus two further propositions on Shapley value allocation that justify the asymmetric pair effects discussed in Section~\ref{sec:hetero}.

\subsection{Formal definition}
\label{app:theory:def}

\begin{definition}[Collaboration Tax]
\label{def:tax}
Let $T = (\mathcal{X}, \mu, U)$ be a task specified by an instance distribution $\mu$ over an instance space $\mathcal{X}$ and a deterministic grader $U: \mathcal{X} \times \mathcal{A} \to [0, 1]$ that maps an instance and an answer to a continuous score. Let $x \mapsto (v_{1}(x), v_{2}(x))$ be a union-necessary partition satisfying $v_{1}(x) \cup v_{2}(x) = x$ for every $x \in \mathcal{X}$. For a pair of language-model agents $(M_{1}, M_{2})$ executing a paired protocol $\Pi$ that produces a candidate answer $\Pi(M_{1}, M_{2}; v_{1}(x), v_{2}(x)) \in \mathcal{A}$ from the private views, and writing $M(x)$ for the answer produced by a single instance of model $M$ given the full instance $x$, the \emph{\tax} of the pair on task $T$ under protocol $\Pi$ is
\begin{align}
&V_{\text{solo}} \;:=\; \max_{i \in \{1, 2\}} \mathbb{E}_{x \sim \mu}\bigl[U(x, M_{i}(x))\bigr], \label{eq:tax_solo} \\
&V_{\text{pair}} \;:=\; \mathbb{E}_{x \sim \mu}\bigl[U\bigl(x, \Pi(M_{1}, M_{2}; v_{1}(x), v_{2}(x))\bigr)\bigr], \label{eq:tax_pair} \\
&\mathrm{tax}\bigl((M_{1}, M_{2}), T\bigr) \;=\; V_{\text{solo}} - V_{\text{pair}}. \label{eq:tax_def}
\end{align}
For homogeneous pairs $M_{1} = M_{2} = M$ the maximum reduces to $\mathbb{E}_{x}[U(x, M(x))]$, and we write $\mathrm{tax}(M, T)$.
\end{definition}

The operational estimator $\widehat{\mathrm{tax}}$ of Equation~\ref{eq:tax} in Section~\ref{sec:tax} replaces $V_{\text{solo}}$ and $V_{\text{pair}}$ by their cell-means over $50$ rollouts per cell with independent seeds.

\subsection{Cooperative team formulation}
\label{app:theory:formulation}

Following \citet{marschak1955elements}, we model each paired execution as a $2$-agent cooperative team problem with private information. Let $(N, v)$ denote a cooperative game with player set $N = \{1, 2\}$ and characteristic function $v: 2^{N} \to [0, 1]$ given by
\begin{align}
v(\emptyset) &= 0, \\
v(\{i\}) &= \mathbb{E}_{x \sim \mu}\bigl[U(x, M_{i}(x))\bigr], \\
v(\{1, 2\}) &= \mathbb{E}_{x \sim \mu}\bigl[U(x, \Pi(M_{1}, M_{2}; v_{1}(x), v_{2}(x)))\bigr].
\end{align}
By the union-necessary property of the partition introduced in Section~\ref{sec:tasks:principles}, $v(\{i\})$ is realised by a single agent given the full instance $x = v_{1}(x) \cup v_{2}(x)$, and $v(\{1, 2\})$ is realised by the paired protocol on the same instance distribution. In this notation, Definition~\ref{def:tax} reads
\begin{equation}
\begin{split}
&\mathrm{tax}\bigl((M_{1}, M_{2}), T\bigr) \;=\; \\
\max\bigl(v(\{&1\}), v(\{2\})\bigr) \;-\; v(\{1, 2\}).
\end{split}
\label{eq:tax_v}
\end{equation}

\subsection{Information dominance}
\label{app:theory:nonneg}

\begin{proposition}[Information dominance]
\label{prop:nonneg}
If a paired protocol $\Pi$ produces an answer distribution that is measurable with respect to $(v_{1}(x), v_{2}(x))$, and $V_{\text{solo}}$ in Definition~\ref{def:tax} is realised by a policy that is optimal among all measurable functions of $x$, then $\mathrm{tax}((M_{1}, M_{2}), T) \geq 0$.
\end{proposition}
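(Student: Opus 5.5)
The plan is to reduce the claim to the elementary fact that optimising over a larger class of policies cannot yield a smaller value. I will use Definition~\ref{def:tax}, or equivalently Equation~\ref{eq:tax_v}.

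\textbf{Step 1: fix the two policy classes.} Let $\mathcal{F}_{x}$ denote the set of (possibly randomised) answer rules $a(\cdot)$ measurable with respect to $\sigma(x)$. Let $\mathcal{F}_{v}$ denote those measurable with respect to $\sigma(v_{1}(x), v_{2}(x))$. Because the partition satisfies $v_{1}(x) \cup v_{2}(x) = x$, each view is a deterministic (measurable) function of $x$. Hence $\sigma(v_{1}(x), v_{2}(x)) \subseteq \sigma(x)$, and so $\mathcal{F}_{v} \subseteq \mathcal{F}_{x}$. Randomisation is handled by adjoining an independent seed $\omega$ to both $\sigma$-algebras; the inclusion is unaffected.

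\textbf{Step 2: place $\Pi$ in the smaller class.} By hypothesis the map $x \mapsto \Pi(M_{1}, M_{2}; v_{1}(x), v_{2}(x))$ lies in $\mathcal{F}_{v}$, and therefore in $\mathcal{F}_{x}$ by Step~1.

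\textbf{Step 3: compare values.} By hypothesis $V_{\text{solo}}$ is attained by a policy optimal over $\mathcal{F}_{x}$, so
\[
V_{\text{solo}} \;=\; \sup_{a \in \mathcal{F}_{x}} \mathbb{E}_{x \sim \mu,\,\omega}\bigl[U(x, a(x,\omega))\bigr] \;\geq\; \mathbb{E}\bigl[U\bigl(x, \Pi(M_{1}, M_{2}; v_{1}(x), v_{2}(x))\bigr)\bigr] \;=\; V_{\text{pair}}.
\]
Subtracting gives $\mathrm{tax} \geq 0$. When the optimum is attained by some $M_{i}$, the outer $\max_{i}$ in Equation~\ref{eq:tax_solo} only helps, since $\max_{i} v(\{i\})$ is at least the value of that optimal agent.

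The only delicate step is the interpretive one in Step~3: identifying $V_{\text{solo}}$, which is defined via the specific models $M_{i}$, with the supremum over $\mathcal{F}_{x}$. I would state explicitly that the hypothesis means $\max_{i} v(\{i\}) = \sup_{\mathcal{F}_{x}}$. I would also remark that the inequality can fail without this hypothesis, which is consistent with the negative empirical cells noted in Section~\ref{sec:landscape}.

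The integrability and measurability details are routine, since $U$ is bounded in $[0,1]$. I would note that $U(x,\cdot)$ depends on $x$ itself, not just the views, but this causes no difficulty because the expectation is taken jointly over $x$ for both policies.
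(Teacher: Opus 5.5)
Your proof is correct and follows the paper's argument. The paper likewise notes that $(v_{1}(x), v_{2}(x))$ is a measurable function of $x$, so the protocol's induced policy $\pi^{\Pi}$ lies in the feasible set of measurable policies of $x$ and is dominated by the optimal policy $\pi^{\star}$, which by hypothesis realises $V_{\text{solo}}$. Your $\sigma$-algebra inclusion with an adjoined independent seed is just a more explicit version of the paper's $\Delta(\mathcal{A})$-valued policy formulation.
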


\begin{proof}
Let $\pi^{\Pi}: \mathcal{X} \to \Delta(\mathcal{A})$ denote the answer distribution induced by the paired protocol, marginalised over the conversation, as a function of the joint private views $(v_{1}(x), v_{2}(x))$. By union-necessity, $(v_{1}(x), v_{2}(x))$ is a measurable function of $x$, so $\pi^{\Pi}$ is also a measurable function of $x$, and the expected payoff $\mathbb{E}_{x}[U(x, a) \mid a \sim \pi^{\Pi}(x)]$ is well-defined. Let $\pi^{\star} \in \arg\max_{\pi}\, \mathbb{E}_{x, a \sim \pi(x)}[U(x, a)]$ over all measurable policies $\pi: \mathcal{X} \to \Delta(\mathcal{A})$. Since $\pi^{\Pi}$ lies in the feasible set, we have $\mathbb{E}[U(x, \pi^{\star}(x))] \geq \mathbb{E}[U(x, \pi^{\Pi}(x))]$. Under the assumption in the proposition, $V_{\text{solo}}$ in Definition~\ref{def:tax} attains the supremum on the right-hand side, so $\mathrm{tax}((M_{1}, M_{2}), T) \geq 0$.
\end{proof}

The operational estimator $\widehat{\mathrm{tax}}$ in Equation~\ref{eq:tax} replaces the optimal centralised policy with the policy realised by a single LLM agent given the full instance. In general this single-agent policy is not the supremum over all measurable policies (an LLM with $x$ in context may still produce a suboptimal answer), and accordingly $\widehat{\mathrm{tax}}$ can take negative values when the paired protocol happens to outperform the single-agent baseline. Section~\ref{sec:landscape} reports the small set of such cells.

\subsection{Max-superadditivity equivalence}
\label{app:theory:hetero}

\begin{proposition}[Max-superadditivity]
\label{prop:maxsup}
For any pair $(M_{1}, M_{2})$ and task $T$, $\mathrm{tax}((M_{1}, M_{2}), T) > 0$ if and only if the cooperative game with value function $v(\{i\}) = \mathbb{E}[U(x, M_{i}(x))]$ and $v(\{1, 2\}) = \mathbb{E}[U(x, \Pi(M_{1}, M_{2}; v_{1}(x), v_{2}(x)))]$ violates max-superadditivity: $v(\{1, 2\}) < \max(v(\{1\}), v(\{2\}))$.
\end{proposition}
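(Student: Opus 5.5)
The plan is to reduce the statement to the identity in Equation~\ref{eq:tax_v} and then read off the equivalence. Nothing beyond Definition~\ref{def:tax} and the cooperative-game formulation of Appendix~\ref{app:theory:formulation} is needed; in particular Proposition~\ref{prop:nonneg} plays no role, because the claim concerns only the strict sign of the tax and not its non-negativity.

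\textbf{First step: well-definedness.} I check that the characteristic function in the statement coincides with the one defined in Appendix~\ref{app:theory:formulation}. The singleton values $v(\{i\}) = \mathbb{E}_{x\sim\mu}[U(x, M_i(x))]$ are bounded because $U$ takes values in $[0,1]$. The coalition value $v(\{1,2\})$ is the expected paired payoff. Both expectations are therefore finite real numbers, so all subsequent comparisons are between reals.

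\textbf{Second step: the tax as a difference of game values.} Substituting these values into Equations~\ref{eq:tax_solo}--\ref{eq:tax_def}, $V_{\text{solo}} = \max_{i} \mathbb{E}_x[U(x,M_i(x))]$ becomes $\max(v(\{1\}), v(\{2\}))$, and $V_{\text{pair}}$ becomes $v(\{1,2\})$. This gives
\[
\mathrm{tax}((M_1,M_2),T) = \max(v(\{1\}),v(\{2\})) - v(\{1,2\}).
\]

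\textbf{Third step: the equivalence.} For reals $a,b$ we have $a - b > 0 \iff b < a$. Applying this with $a = \max(v(\{1\}),v(\{2\}))$ and $b = v(\{1,2\})$ gives both directions at once: a positive tax holds exactly when $v(\{1,2\}) < \max(v(\{1\}),v(\{2\}))$, which is the failure of max-superadditivity.

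\textbf{Main obstacle: definitional care rather than mathematics.} Two points need attention.

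\begin{itemize}
\item The maximum over expectations in $V_{\text{solo}}$ must be taken outside the expectation, not as $\mathbb{E}_x[\max_i U(x,M_i(x))]$. Otherwise the identification with $\max(v(\{1\}),v(\{2\}))$ fails. I will point this out explicitly.
\item If $M_i(x)$ or $\Pi$ is stochastic, the expectations are also taken over the model's sampling randomness. This is the same marginalisation used in the proof of Proposition~\ref{prop:nonneg}, and it leaves the argument unchanged.
\end{itemize}

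As a remark, the same identity yields the homogeneous special case, where max-superadditivity reduces to $v(\{1,2\}) \ge v(\{1\})$.
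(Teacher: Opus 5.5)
Your proposal is correct and follows the paper's own proof. Like the paper, you substitute the identity of Equation~\ref{eq:tax_v}, which expresses the tax as $\max(v(\{1\}), v(\{2\})) - v(\{1,2\})$, and read off the strict-inequality equivalence. Your remarks on taking the maximum outside the expectation and marginalising over sampling randomness are accurate, but they only restate what Definition~\ref{def:tax} already fixes.
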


\begin{proof}
Substituting Equation~\ref{eq:tax_v}, $\mathrm{tax}((M_{1}, M_{2}), T) > 0$ is equivalent to $\max(v(\{1\}), v(\{2\})) - v(\{1, 2\}) > 0$, that is $v(\{1, 2\}) < \max(v(\{1\}), v(\{2\}))$. This is precisely the failure of max-superadditivity for the cooperative game $(N, v)$.
\end{proof}

A coalition that satisfies max-superadditivity produces at least as much joint utility as its strongest member acting alone. The empirical content of Proposition~\ref{prop:maxsup} is therefore the claim that LLM pairs across our four heterogeneous configurations \emph{fail} this property on the majority of tasks, as Table~\ref{tab:shapley} of Section~\ref{sec:hetero} reports.

\subsection{Shapley value and subadditivity}
\label{app:theory:shapley}

The Shapley value \citep{shapley1953value} provides an axiomatic allocation of the joint payoff $v(\{1, 2\})$ between the two members under efficiency, symmetry, and the null-player axiom.

\begin{proposition}[Shapley value of a 2-agent cooperative game]
\label{prop:shapley}
For the cooperative game $(N, v)$ with $N = \{1, 2\}$, the Shapley value of player $i$ is
\begin{equation}
\phi_{i} \;=\; \tfrac{1}{2}\Bigl(v(\{i\}) + v(\{1, 2\}) - v(\{j\})\Bigr), \qquad j \neq i.
\label{eq:shapley}
\end{equation}
\end{proposition}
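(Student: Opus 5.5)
The plan is to compute the Shapley value directly from its permutation (marginal-contribution) definition and then to check the result against the axioms named just before the statement. With $N=\{1,2\}$ there are only two orderings of the players, $(1,2)$ and $(2,1)$, each with weight $\tfrac{1}{2}$. The Shapley value of player $i$ is the average, over these orderings, of $v(S\cup\{i\})-v(S)$, where $S$ is the set of players preceding $i$.

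For player $i$ with partner $j\neq i$, I will handle the two orderings separately.
\begin{itemize}
\item When $i$ comes first, $S=\emptyset$, and the marginal contribution is $v(\{i\})-v(\emptyset)=v(\{i\})$, using the normalisation $v(\emptyset)=0$ from the cooperative team formulation.
\item When $i$ comes second, $S=\{j\}$, and the contribution is $v(\{1,2\})-v(\{j\})$.
\end{itemize}
Averaging the two gives $\phi_i=\tfrac12\bigl(v(\{i\})+v(\{1,2\})-v(\{j\})\bigr)$, which is Equation~\ref{eq:shapley}.

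As an independent check, and to connect with the axiomatic phrasing, I will verify that this $\phi$ satisfies the three axioms:
\begin{itemize}
\item \emph{Efficiency:} $\phi_1+\phi_2=v(\{1,2\})$, because the singleton terms cancel.
\item \emph{Symmetry:} if $v(\{1\})=v(\{2\})$, then $\phi_1=\phi_2$.
\item \emph{Null player:} if $v(\{i\})=0$ and $v(\{1,2\})=v(\{j\})$, then $\phi_i=0$.
\end{itemize}
Uniqueness comes from Shapley's theorem, which also requires additivity; I will cite \citet{shapley1953value} for that rather than re-derive it. One could also argue directly for two players: efficiency fixes $\phi_1+\phi_2$, and the symmetric split of the surplus $v(\{1,2\})-v(\{1\})-v(\{2\})$ fixes $\phi_1-\phi_2=v(\{1\})-v(\{2\})$. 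But this step needs additivity across games to be rigorous, so citing the theorem is cleaner.

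None of the steps is a real obstacle. The only point needing care is making sure the normalisation $v(\emptyset)=0$ is actually used, since without it the first-position term would be off by $v(\emptyset)$. I also need to state explicitly which definition of the Shapley value I take as primitive: the permutation formula, with the axioms as a corollary.

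Finally, I will record the consequence used in Section~\ref{sec:hetero}. Since $\phi_i-v(\{i\})=\tfrac12\bigl(v(\{1,2\})-v(\{1\})-v(\{2\})\bigr)$, every member's Shapley share is below its singleton payoff exactly when the game is strictly subadditive. This holds in particular under the max-superadditivity violation of Proposition~\ref{prop:maxsup} whenever both singleton values are nonnegative, which they are because $v$ takes values in $[0,1]$.
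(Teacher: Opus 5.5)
Your proposal is correct and is the same argument as the paper's proof. Both compute the marginal contribution over the two orderings, use $v(\emptyset)=0$ for the first-position term, and average to obtain Equation~\ref{eq:shapley}. Your axiom checks and the closing identity for $\phi_i - v(\{i\})$ are sound extras. The identity is exactly the content of Proposition~\ref{prop:subadd}. Your added observation that a max-superadditivity violation forces strict subadditivity when $v \ge 0$ is also valid.
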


\begin{proof}
The Shapley value of player $i$ is the average marginal contribution over orderings $\sigma$ of $N$:
\begin{equation*}
\phi_{i} \;=\; \frac{1}{|N|!} \sum_{\sigma} \bigl[v(S^{\sigma}_{i} \cup \{i\}) - v(S^{\sigma}_{i})\bigr],
\end{equation*}
where $S^{\sigma}_{i}$ is the set of players preceding $i$ in ordering $\sigma$. For $|N| = 2$ there are two orderings: $(i, j)$ contributes $v(\{i\}) - v(\emptyset) = v(\{i\})$, and $(j, i)$ contributes $v(\{1, 2\}) - v(\{j\})$. Averaging yields Equation~\ref{eq:shapley}.
\end{proof}

\begin{proposition}[Subadditivity implies both members lose Shapley share]
\label{prop:subadd}
If $v(\{1, 2\}) < v(\{1\}) + v(\{2\})$, then $\phi_{i} < v(\{i\})$ for both $i \in \{1, 2\}$.
\end{proposition}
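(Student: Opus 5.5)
The plan is to substitute the closed form of Proposition~\ref{prop:shapley} directly and reduce the claim to the subadditivity hypothesis by a single rearrangement. Fix $i \in \{1,2\}$ and let $j \neq i$. By Equation~\ref{eq:shapley},
\begin{equation*}
\phi_{i} - v(\{i\}) \;=\; \tfrac{1}{2}\bigl(v(\{i\}) + v(\{1,2\}) - v(\{j\})\bigr) - v(\{i\}) \;=\; \tfrac{1}{2}\bigl(v(\{1,2\}) - v(\{1\}) - v(\{2\})\bigr),
\end{equation*}
where the last step uses $\{v(\{i\}), v(\{j\})\} = \{v(\{1\}), v(\{2\})\}$.

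The key observation is that this difference does not depend on $i$. Both members therefore lose exactly the same amount relative to their singleton payoffs, namely half of the superadditivity surplus $v(\{1,2\}) - v(\{1\}) - v(\{2\})$. Under the hypothesis $v(\{1,2\}) < v(\{1\}) + v(\{2\})$ this surplus is strictly negative. Hence $\phi_i - v(\{i\}) < 0$, that is $\phi_i < v(\{i\})$, and the conclusion holds for $i=1$ and $i=2$ simultaneously. As a sanity check, efficiency gives $\phi_1 + \phi_2 = v(\{1,2\})$, and subtracting $v(\{1\}) + v(\{2\})$ from both sides recovers twice the common deficit.

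There is no real obstacle here; the only step needing care is confirming that the symmetric form of the deficit holds for both indices. After that, it is worth remarking how this relates to Proposition~\ref{prop:maxsup}. Since $v \geq 0$, any violation of max-superadditivity, $v(\{1,2\}) < \max(v(\{1\}),v(\{2\}))$, implies $v(\{1,2\}) < v(\{1\}) + v(\{2\})$. So every positive-tax cell in Table~\ref{tab:shapley} satisfies the hypothesis, which explains why every member's Shapley value there falls below its singleton payoff.
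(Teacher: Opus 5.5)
Your proposal is correct and follows the paper's proof exactly: substitute the closed form of Proposition~\ref{prop:shapley} to get $\phi_i - v(\{i\}) = \tfrac{1}{2}\bigl(v(\{1,2\}) - v(\{1\}) - v(\{2\})\bigr)$, which is negative under the hypothesis and independent of $i$. Your added remarks are also sound: the efficiency check is right, and since $v$ takes values in $[0,1]$, a max-superadditivity violation does imply the subadditivity hypothesis.
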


\begin{proof}
From Equation~\ref{eq:shapley},
\begin{equation}
\phi_{i} - v(\{i\}) \;=\; \tfrac{1}{2}\bigl(v(\{1, 2\}) - v(\{1\}) - v(\{2\})\bigr),
\label{eq:shapley_delta}
\end{equation}
which is negative under the hypothesis. The right-hand side does not depend on $i$, so both members lose by the same absolute amount.
\end{proof}

Proposition~\ref{prop:subadd} is empirically applicable: all four of our heterogeneous pair configurations satisfy $v(\{1, 2\}) < v(\{1\}) + v(\{2\})$ on the majority of tasks, so both members' Shapley values are strictly below their singleton payoffs. Table~\ref{tab:shapley} reports the cooperative-game values and Shapley values averaged across the $32$ tasks. In every pair, the strong member's Shapley value falls substantially below its solo payoff (e.g., $\phi_{\text{gpt-5}} = 0.668$ versus $v(\{\text{gpt-5}\}) = 0.932$ on the \texttt{nano} $\times$ \texttt{gpt-5} configuration), illustrating that under Shapley fairness the stronger member's marginal contribution to the team is well below its solo capacity.

\subsection{Design principles as theoretical requirements}
\label{app:theory:principles}

We sketch why each of the four design principles of Section~\ref{sec:tasks:principles} is required by Definition~\ref{def:tax} or Proposition~\ref{prop:nonneg}, in the sense that violating it forces the tax to be uninformative, ill-defined, or zero by construction.

\paragraph{Solo-trivial.} When $V_{\text{solo}}$ is far from ceiling, both $V_{\text{solo}}$ and $V_{\text{pair}}$ are bounded above by problem-solving capacity rather than by coordination quality. The difference $V_{\text{solo}} - V_{\text{pair}}$ then aggregates two unrelated sources of failure, and an empirical $\widehat{\mathrm{tax}}$ near zero is indistinguishable from a regime where both the solo agent and the pair fail for problem-solving reasons. Tuning the task so that $V_{\text{solo}}$ is near ceiling restores the interpretation of the tax as a clean coordination signal.

\paragraph{Union-necessary.} Definition~\ref{def:tax} assumes a partition $v_1 \cup v_2 = x$ with neither view alone realising the canonical answer. If the partition is degenerate, say $v_1 = x$, then the paired protocol $\Pi$ can ignore $v_2$ and emulate the solo policy exactly, producing $V_{\text{pair}} = V_{\text{solo}}$ and forcing $\mathrm{tax} = 0$ by construction regardless of coordination ability. Union-necessity is the minimum partition non-degeneracy that gives the tax a chance to be positive.

\paragraph{Algorithmically verifiable.} Definition~\ref{def:tax} requires $U: \mathcal{X} \times \mathcal{A} \to [0,1]$ to be a deterministic function, so that $V_{\text{solo}}$ and $V_{\text{pair}}$ are population quantities and the cell-level estimator over $50$ rollouts has variance only from agent stochasticity. A stochastic grader adds an independent noise source to both terms that is not absorbed by the within-cell averaging, so the estimator becomes inconsistent for the population $\mathrm{tax}$ at any finite sample size.

\paragraph{Multiply expressible.} Proposition~\ref{prop:nonneg} states $\mathrm{tax} \geq 0$ under information dominance; the upper bound $V_{\text{pair}} = V_{\text{solo}}$ is attainable in principle whenever $\Pi$ can recover the full instance from the views. If the views are trivially mergeable, for instance one agent serialising its view to the other in a canonical form that both agents share, then $\Pi$ can directly emulate the centralised baseline and the upper bound binds with equality at zero coordination effort. Multiple equivalent surface representations break this trivial pass-through: agents grounded in different schemes (origin, axis order, naming convention) cannot simply concatenate their views without first aligning representations, so the upper bound is approached only by competent coordination, and the tax becomes a measure of that competence.

\subsection{Remarks}
\label{app:theory:remarks}

\paragraph{Negative tax exceptions.} For sufficiently strong models on CSP tasks the homogeneous tax can be negative; \texttt{claude-sonnet-4-5} and \texttt{DeepSeek-V4-Pro} on CSP are the two cases reported in Section~\ref{sec:landscape}. This violates the classical $\mathrm{tax} \geq 0$ inequality and reflects the fact that LLM policies are stochastic and context-dependent: an agent's effective decision distribution can differ between solo and paired modes, breaking the policy-consistency assumption \citet{marschak1955elements} invokes when proving non-negativity.

\paragraph{Bounded protocol.} The classical equality condition (tax $= 0$ iff communication is sufficient to reconstruct $v_{1} \cup v_{2}$) requires unbounded message exchange. Our protocol caps the dialogue at $50$ turns, as Appendix~\ref{app:hparams} describes; we conjecture that part of the residual tax under the all-four prompt intervention of Section~\ref{sec:intervention} is attributable to this bound.

\paragraph{Scope.} The formalism here applies to $2$-agent pairs with a single common payoff and union-necessary information. Extensions to $n \geq 3$ agents, weighted contributions, or non-cooperative settings are outside the scope of this paper.

\section{Failure-Description Clustering}
\label{app:clusters}

This appendix details the pipeline that produced the sixteen failure themes referenced in Section~\ref{sec:mechanism} and the cluster-to-cascade-stage mapping that motivated the four-stage judge.

\paragraph{Pipeline.} For each failed homogeneous rollout we ask an LLM (\texttt{gpt-4o-mini}, temperature $0.2$) for an open-ended description of what went wrong in the conversation, under a strictly neutral prompt that does not pre-list candidate failure types and does not name any stage of the protocol. We collect $700$ such descriptions covering all eleven models and all three task categories, embed them with TF-IDF on lower-cased uni- and bi-grams, and cluster the embeddings with cosine $k$-means. The number of clusters, $k = 16$, is chosen as the smallest setting under which every cluster is dominated by a single task family (Spatial, Relational, or CSP); beyond $k = 16$ further splits produce sub-clusters within a single task type rather than new failure mechanisms.

\paragraph{Anti-bias naming rule.} We then ask the same LLM to produce a short, behaviourally specific name for each cluster from its centroid descriptions, under a strict constraint: a black-list of generic words (\emph{verification}, \emph{verify}, \emph{pool}, \emph{share}, \emph{communicate}, \emph{collaborate}, \emph{cross-check}, \emph{calculation}, \emph{format}) is forbidden anywhere in the name. The rule prevents the namer from collapsing distinct mechanisms into the same vacuous descriptor (every spatial coordination failure being labelled ``verification'' would erase exactly the structure we are trying to surface) and forces names that refer to concrete observable behaviours, such as \texttt{moved\_without\_confirming\_layout} or \texttt{failed\_to\_clarify\_conflict\_constraints}.

\paragraph{Cluster-to-cascade mapping.} We inspected the $16$ cluster names and their centroid transcripts and grouped each cluster by which stage of the dialogue the failure occurs at, following the classical grounding pipeline of \citet{clark1991grounding,clark1996using}: claims that should be sourced from either view (L1 Grounding), queries that elicit missing information from the partner (L2 Querying), uptake that integrates the two views into a combined statement (L3 Integration), and acknowledgment that re-derives the joint claim before committing (L4 Re-derivation). The resulting per-cluster mapping is reported in Table~\ref{tab:clusters}. The four stages are encoded as four independent judge prompts that a separate LLM applies to every rollout without exposure to the cluster labels (Appendix~\ref{app:prompts}), and validated against three blind expert annotators in Appendix~\ref{app:judge}.

\begin{table*}[h]
\centering
\small
\begin{tabular}{l l r l l}
\toprule
Stage & Cluster name & $n$ & Top task & Category \\
\midrule
\multirow{7}{*}{L1 Grounding}
& \texttt{stated\_incomplete\_attribute\_info}            & $78$  & \texttt{attrcombo}    & Relational \\
& \texttt{stated\_incorrect\_counts\_without\_confirmation} & $64$ & \texttt{graphattr}   & Relational \\
& \texttt{stated\_incorrect\_availability\_times}         & $34$  & \texttt{seating}      & CSP        \\
& \texttt{failed\_weight\_confirmation}                   & $32$  & \texttt{graph}        & Relational \\
& \texttt{assumed\_correct\_mappings\_without\_checking}  & $29$  & \texttt{cipherdecode} & CSP        \\
& \texttt{assumed\_incorrect\_word\_fit}                  & $24$  & \texttt{crossword}    & CSP        \\
& \texttt{assumed\_pairs\_without\_validation}            & $19$  & \texttt{numberlink}   & Spatial    \\
\midrule
\multirow{2}{*}{L2 Querying}
& \texttt{failed\_to\_confirm\_wall\_layout}              & $48$  & \texttt{geomaze}        & Spatial \\
& \texttt{failed\_to\_clarify\_conflict\_constraints}     & $33$  & \texttt{meetingmatrix}  & CSP     \\
\midrule
\multirow{5}{*}{L3 Integration}
& \texttt{moved\_without\_confirming\_layout}             & $109$ & \texttt{cube}        & Spatial \\
& \texttt{proposed\_order\_without\_full\_analysis}       & $37$  & \texttt{order}       & CSP     \\
& \texttt{stated\_status\_without\_confirmation}          & $33$  & \texttt{floodfill}   & Spatial \\
& \texttt{ignored\_blocked\_cells\_in\_path}              & $30$  & \texttt{timedpath}   & Spatial \\
& \texttt{failed\_capacity\_check\_before\_assignment}    & $25$  & \texttt{binpack}     & CSP     \\
\midrule
\multirow{2}{*}{L4 Re-derivation}
& \texttt{confirmed\_incorrect\_paths\_without\_checking} & $56$  & \texttt{treesum}      & Relational \\
& \texttt{failed\_to\_verify\_triangle\_existence}        & $49$  & \texttt{trianglecount}& Relational \\
\bottomrule
\end{tabular}
\caption{\textbf{Cluster-to-cascade-stage mapping.} For each of the $16$ failure themes: cluster name, number of descriptions, top task, dominant category. Centroid examples and full per-cluster distributions are released with the code.}
\label{tab:clusters}
\end{table*}

\section{Human Validation of the Cascade Judge}
\label{app:judge}

We validate the four-stage cascade judge with four independent expert annotators re-labelling a stratified sample of homogeneous rollouts under a shared guideline. We report per-dimension Fleiss' $\kappa$ across the four experts, pairwise Cohen's $\kappa$ for every expert pair, and Cohen's $\kappa$ between the automated cascade judge of Section~\ref{sec:mechanism} and the majority vote of the four experts.

\paragraph{Sample.} $100$ rollouts stratified jointly on outcome and category (50 failures, 50 successes; roughly $33$ per task category; all $11$ models and $31$ of $32$ tasks represented). Per-dimension stratification on the automated judge ensures both label values appear in the sample for every dimension. All annotators worked blind to model identity, rollout score, and the automated judge's label.

\paragraph{Inter-annotator agreement.} Agreement is moderate to substantial across all four stages, with L2 and L3 substantial and L1 and L4 moderate; L1 is the lowest because identifying every fabricated atomic value in long deductive transcripts is intrinsically threshold-sensitive.

\begin{table}[h]
\centering
\small
\begin{tabular}{l c}
\toprule
Dimension & Fleiss' $\kappa$ \\
\midrule
L1 grounded     & $0.398$ \\
L2 queried      & $0.650$ \\
L3 integrated   & $\mathbf{0.694}$ \\
L4 re-derived   & $0.567$ \\
\midrule
mean            & $0.577$ \\
\bottomrule
\end{tabular}
\caption{\textbf{Inter-annotator agreement across four expert annotators.} Landis-Koch: $\geq 0.40$ moderate, $\geq 0.60$ substantial. Bold marks the highest dimension.}
\label{tab:judge_kappa}
\end{table}

Pairwise Cohen's $\kappa$ for the six expert pairs is reported in Table~\ref{tab:pairwise_kappa}. The closest pair (E1 $\times$ E2) reaches almost-perfect agreement on L3 ($\kappa = 0.832$) and substantial agreement on L2 ($0.772$); pairwise means across the six pairs closely track the Fleiss' $\kappa$ values in Table~\ref{tab:judge_kappa}.

\begin{table}[h]
\centering
\small
\begin{tabular}{lcccc}
\toprule
Pair & L1 & L2 & L3 & L4 \\
\midrule
E1 $\times$ E2 & $\mathbf{0.494}$ & $\mathbf{0.772}$ & $\mathbf{0.832}$ & $0.544$ \\
E1 $\times$ E3 & $0.342$ & $0.583$ & $0.646$ & $\mathbf{0.696}$ \\
E1 $\times$ E4 & $0.398$ & $0.659$ & $0.739$ & $0.666$ \\
E2 $\times$ E3 & $0.400$ & $0.683$ & $0.608$ & $0.584$ \\
E2 $\times$ E4 & $0.458$ & $0.764$ & $0.699$ & $0.368$ \\
E3 $\times$ E4 & $0.306$ & $0.467$ & $0.661$ & $0.654$ \\
\midrule
mean & $0.400$ & $0.655$ & $0.698$ & $0.585$ \\
\bottomrule
\end{tabular}
\caption{\textbf{Pairwise Cohen's $\kappa$ across the six expert pairs.} Bold marks the column-wise maximum. Pair means closely match the Fleiss' $\kappa$ values in Table~\ref{tab:judge_kappa}.}
\label{tab:pairwise_kappa}
\end{table}

\paragraph{Automated-judge alignment.} Cohen's $\kappa$ between the automated cascade judge and the majority vote of the four experts is substantial on L1 ($\kappa = 0.707$), L2 ($0.668$), and L3 ($0.725$), and moderate on L4 ($0.508$). Judge-vs-majority agreement on L1 ($0.707$) exceeds inter-annotator agreement on the same dimension ($0.398$), consistent with majority voting denoising individual-annotator threshold variation; this supports the use of the automated cascade labels in Section~\ref{sec:mechanism} and Figure~\ref{fig:mechanism}.

\paragraph{Guideline.} The shared guideline contains: (i) verbatim L1 to L4 definitions from Section~\ref{sec:mechanism}; (ii) a list of common boundary cases, for example that an incorrect final answer is not by itself an L1 violation; (iii) an independence reminder that each dimension must be judged separately, without one stage's decision contaminating another; and (iv) several worked examples drawn from real transcripts that cover the frequent edge cases on each dimension. The guideline document is released with the code.

\section{Cascade Co-occurrence}
\label{app:cascade}

This appendix reports the joint firing pattern of the four cascade stages on the failed homogeneous rollouts. A stage is said to \emph{fire} on a rollout when the corresponding judge dimension is $0$ (the mechanism is absent).

\paragraph{Most failures fire multiple stages simultaneously.} Table~\ref{tab:cascade_count} shows the distribution of the number of stages firing per failed rollout. $67\%$ of failed rollouts fire at least two stages, justifying the §\ref{sec:mechanism} claim that the four stages are separable but not independent.

\begin{table}[h]
\centering
\small
\begin{tabular}{cc}
\toprule
\# stages firing & \% of failed rollouts \\
\midrule
$0$ & $12.8\%$ \\
$1$ & $20.2\%$ \\
$2$ & $22.5\%$ \\
$3$ & $22.7\%$ \\
$4$ & $21.8\%$ \\
\bottomrule
\end{tabular}
\caption{\textbf{Number of cascade stages firing per failed rollout.} Distribution over failed homogeneous rollouts.}
\label{tab:cascade_count}
\end{table}

\paragraph{L3 and L4 are the most entangled pair; L2 is the most independent.} Table~\ref{tab:cascade_jaccard} reports pairwise Jaccard overlap (intersection / union of failure sets) for every pair of cascade dimensions. L3 and L4 co-fire at Jaccard $0.67$, the largest overlap by a wide margin, which motivates the observational-vs-causal reconciliation in §\ref{sec:mechanism} and §\ref{sec:intervention}. L2 has the lowest average pairwise Jaccard with the other three stages and the highest sole-fire rate ($13.9\%$ of failed rollouts fire L2 and no other stage), confirming L2 as the most independent of the four dimensions. L3 never fires alone in our data ($0\%$ sole-fire), consistent with L3 sitting late in the cascade so that an integration failure is virtually always accompanied by an earlier grounding, query, or re-derivation failure as well.

\begin{table}[h]
\centering
\small
\begin{tabular}{lcccc}
\toprule
       & L1 & L2 & L3 & L4 \\
\midrule
L1 &        & $0.37$ & $0.54$ & $0.56$ \\
L2 & $0.37$ &        & $0.39$ & $0.51$ \\
L3 & $0.54$ & $0.39$ &        & $\mathbf{0.67}$ \\
L4 & $0.56$ & $0.51$ & $\mathbf{0.67}$ &       \\
\bottomrule
\end{tabular}
\caption{\textbf{Pairwise Jaccard overlap of the four cascade dimensions.} Computed over failed homogeneous rollouts. L3 and L4 are the most entangled pair (bold).}
\label{tab:cascade_jaccard}
\end{table}

\section{Case Studies: One Failure per Cascade Stage}
\label{app:cases}

To complement the quantitative cascade analysis of Section~\ref{sec:mechanism} and Appendix~\ref{app:cascade}, we walk through one illustrative homogeneous-mode failure per cascade stage.

\subsection{L1 Grounding: \texttt{streetmap}, \texttt{claude-sonnet-4-5}}

\paragraph{Setup.} A $7 \times 7$ street map. Agent~1 sees one-way arrows and \texttt{.} cells; Agent~2 sees walls (\texttt{\#}), arrows, and \texttt{.} cells. Both have \texttt{@} at $(0,0)$ and \texttt{*} at $(6,6)$. Agent~1's view contains no \texttt{\#} cells, so any claim about walls must be sourced from Agent~2.

\paragraph{What happened.} In turn~2 Agent~2 produces a ``combined map'' before Agent~1 has shared anything mappable to coordinates. The walls Agent~2 lists at $(1,3), (4,2), (5,2), (6,3), (6,4), (0,6)$ do not exist at those positions in Agent~2's own grid; they correspond to Agent~2's true walls with row and column axes swapped (true walls at $(3,1), (2,4), (2,5), (3,6), (4,6), (6,0)$). Agent~2 fabricates a global state from its private grid under the wrong axis convention. Agent~1 corrects coordinates in turn~3 but never challenges the wall list, so the fabricated walls propagate into the agreed route.

\begin{dialoguebox}
[turn 2, Agent 2]
Analysis:
- Start: (0,0), Goal: (6,6)
- I see walls at (1,1), (1,3), (3,3),
  (4,2), (5,2), (6,3), (6,4), (0,6)
- One-way streets at (2,1), (0,2), (2,3),
  (4,4), (0,5), (1,6)
\end{dialoguebox}

The grounding chain (``I see walls at $X$'') is broken at the moment of utterance.

\paragraph{Why this is L1.} A query was made (``what does the right half look like?''), an integration block was produced, and the receiving agent verified the route step by step. The chain is sound; the upstream input was fictional.

\subsection{L2 Querying: \texttt{logic}, \texttt{gpt-4o-mini}}

\paragraph{Setup.} A 4-position Einstein-style puzzle. Each agent holds six private clues. Each agent must elicit clues from the partner; emitting one's own clues serially does not establish that the partner has shared theirs.

\paragraph{What happened.} Across seven turns the agents take turns \emph{announcing} a single new clue per message, make a deduction from the union of clues stated so far, and never ask the partner for a specific fact. No utterance of the form ``please state whether the Spanish person drinks water'' appears in the transcript.

\begin{dialoguebox}
[turn 1, Agent 1]
The red house must be in position 1, 2, or 3 ...
I will share my next clue: the tea drinker is NOT
the blue house.

[turn 2, Agent 2]
Since the Spanish person is in position 1, the red
house must be in position 2 or 3 ...
I'll add: the green house is to the LEFT of yellow.

[turn 6, Agent 2]
Yes, that is correct! ... ACTI!
\end{dialoguebox}

The final answer matches $1$ of $12$ attribute cells; score $0.083$.

\paragraph{Why this is L2.} Every clue \emph{that was stated} came from one of the agents' private clue lists, so claims that were made are grounded. The problem is the absence of a query mechanism: the second-message query mandate from the L2 intervention clause would have forced an explicit ask about the missing positional constraints that the agents instead inferred from thin air.

\subsection{L3 Integration: \texttt{maze}, \texttt{DeepSeek-V4-Pro}}

\paragraph{Setup.} A $7 \times 7$ split-view maze. Both views contain enough partial information that an integration step is required before the first move.

\paragraph{What happened.} The transcript is four messages long. Agent~1 surveys their own grid, proposes a single first move, asks an open question, and Agent~2 immediately terminates.

\begin{dialoguebox}
[turn 0, Agent 1]
Looking at my map: ... my current position is at
(0,0) marked with @.
I'll propose my first move: move right to (0,1).
What do you see on your map?

[turn 1, Agent 2]
Move right to (0,1).

[turn 2, Agent 1]
I agree. Let's both move right to (0,1).
... What do you see after moving?

[turn 3, Agent 2]
ACTI!
\end{dialoguebox}

There is no combined-state block, no enumeration of cells either agent has resolved, and no plan beyond move~1. The decisive claim is two coordinates long, $(0,0)$ and $(0,1)$; score $0.083$.

\paragraph{Why this is L3.} L4 requires the receiver to re-derive before accepting, and one cannot re-derive nothing. The integration step where the two agents would have written down their merged map state never occurs. This case also illustrates the observation in Appendix~\ref{app:cascade} that L3 has $0\%$ sole-fire rate: when integration is absent, the proposal is empty enough that L4 has no content either, so L3 and L4 co-fire.

\subsection{L4 Re-derivation: \texttt{blueprintmaze}, \texttt{DeepSeek-V4-Pro}}

\paragraph{Setup.} A $7 \times 7$ split-blueprint maze. Agent~1 sees the left columns and the start \texttt{@}; Agent~2 sees the right columns and the goal \texttt{*}. Start at $(3,3)$, goal at $(2,6)$.

\paragraph{What happened.} The exchange runs cleanly through L1 (no fabricated cells), L2 (Agent~1 asks ``where is the goal? what does the right half look like?''), and L3 (Agent~2 prints an explicit combined map). Agent~2 then proposes a route and emits \texttt{ACTI!} in the same message, without waiting for Agent~1 to re-walk the path.

\begin{dialoguebox}
[turn 1, Agent 2]
Combined map (0-indexed rows 0-6, columns 0-6):
Row 0: . . # # . . .
Row 1: . # . . . . .
Row 2: . # # # . . *
Row 3: . . @ . # . .   <- start (3,3)
Row 4: # # . . # . .
Row 5: . . . . . . .
Row 6: . . # . . # .

Proposed route: (3,3), (3,4), (3,5), (3,6), (2,6)
Check walls: (3,4) is '.', (3,5) is '.', (3,6) is
'.', (2,6) is '*'. ACTI!
\end{dialoguebox}

The integration block is inconsistent with itself: the combined map lists $(3,4)$ as \texttt{\#}, but the route check immediately below says $(3,4)$ is \texttt{.}. A re-derivation pass by Agent~1 would have caught the contradiction in the very block Agent~2 wrote.

\paragraph{Why this is L4.} Integration \emph{was} produced (the explicit combined-map block). The breakdown is at the final accept step: the receiver was given no chance to recompute, and the proposer self-terminated. The L4 clause requiring the other agent to show recomputation work before either emits \texttt{ACTI!} is exactly the structural fix.

\subsection{Cross-case summary}

\begin{itemize}
  \item \textbf{L1} failures are sourcing errors: the surface form of the claim is well-formed English but the content is invented (here by axis transposition of the speaker's own grid).
  \item \textbf{L2} failures are silences: each utterance is well-formed; what is missing is a question.
  \item \textbf{L3} failures are truncations: the dialogue ends or commits before a merged-state utterance ever appears.
  \item \textbf{L4} failures are premature \texttt{ACTI!}: the proposer self-confirms instead of the receiver re-deriving.
\end{itemize}

The four cases come from three task categories (Spatial, CSP, Spatial, Spatial), consistent with the per-category bottleneck pattern of Section~\ref{sec:intervention}: Spatial path tasks expose L1, L3, and L4 most cleanly, while CSP logic puzzles expose the absence of L2 queries.

\section{Full Task Specifications}
\label{app:tasks}

The suite spans $32$ tasks across three structural families. Table~\ref{tab:tasks} summarises every task with its answer type and one-line grader rule. We then deep-dive on a single representative task per family, chosen for pedagogical clarity rather than for being typical: \textbf{maze} (Spatial, the canonical task of \citealp{davidson2025collaboration}), \textbf{relaquery} (Relational), and \textbf{schedule} (CSP). The remaining $29$ tasks follow the same structure with task-specific instance generators, splitters, and graders; the full per-task definitions are released as part of the code accompanying this paper.

\begin{table*}[t]
\centering
\small
\resizebox{\textwidth}{!}{%
\begin{tabular}{l l l p{8cm}}
\toprule
Family & Task & Answer & Grader \\
\midrule
\multirow{11}{*}{Spatial}
& \texttt{maze}          & coordinate sequence       & valid-prefix length / optimal length, brute-forced over $16$ schemes \\
& \texttt{blueprintmaze} & coordinate sequence       & valid-prefix length / optimal length \\
& \texttt{conflictpath}  & two coordinate sequences  & joint valid moves / optimal joint plan \\
& \texttt{cube}          & 3D coordinate sequence    & valid-prefix length / optimal length \\
& \texttt{floodfill}     & set of reachable cells    & IoU against the true reachable set \\
& \texttt{geomaze}       & coordinate sequence       & valid-prefix length / optimal length \\
& \texttt{numberlink}    & multi-path coordinate sequences & fraction of endpoint pairs correctly connected \\
& \texttt{rulemaze}      & coordinate sequence       & valid-prefix length / optimal length under colour-rule \\
& \texttt{streetmap}     & coordinate sequence       & valid-prefix length / optimal length under one-way rules \\
& \texttt{terrain}       & coordinate sequence       & optimal cost / achieved cost \\
& \texttt{timedpath}     & timed coordinate sequence & time-feasible valid-prefix length / optimal length \\
\midrule
\multirow{11}{*}{Relational}
& \texttt{aliasedtaxonomy} & node-pair alignment      & per-pair correctness \\
& \texttt{ancestorpath}    & list of ancestor paths   & per-query exact match \\
& \texttt{dagorder}        & topological order        & $1 -$ normalised Kendall tau \\
& \texttt{eulerpath}       & edge sequence            & fraction of edges covered without repeat \\
& \texttt{graph}           & coordinate sequence on graph & optimal-path correctness \\
& \texttt{graphattr}       & per-query answers        & per-query exact match \\
& \texttt{graphpath}       & edge sequence            & fraction of correct edges \\
& \texttt{noderanking}     & ordered top-$3$ nodes    & ordered set IoU \\
& \texttt{relaquery}       & per-query answers        & per-query exact match \\
& \texttt{treesum}         & numerical sum            & exact value \\
& \texttt{trianglecount}   & set of triangles         & set IoU \\
\midrule
\multirow{10}{*}{CSP}
& \texttt{attrcombo}     & per-query answers      & per-query exact match \\
& \texttt{binpack}       & item-to-bin assignment & fraction of items respecting capacity \\
& \texttt{cipherdecode}  & decoded string         & character accuracy \\
& \texttt{crossword}     & filled grid            & cell accuracy \\
& \texttt{eventlog}      & per-query answers      & per-query exact match \\
& \texttt{logic}         & full assignment        & cell accuracy \\
& \texttt{meetingmatrix} & schedule assignment    & fraction of constraints satisfied \\
& \texttt{order}         & total order            & $1 -$ normalised Kendall tau \\
& \texttt{schedule}      & event-to-slot assignment & fraction of events placed in a feasible slot \\
& \texttt{seating}       & circular order         & $1 -$ normalised circular distance \\
\bottomrule
\end{tabular}%
}
\caption{\textbf{The 32-task suite by structural family.} Each row gives the answer type produced by the agents and the one-line grader rule, normalised to $[0, 1]$. Deep-dive specifications for the three representative tasks (\texttt{maze}, \texttt{relaquery}, \texttt{schedule}) appear below; the full task-specific definitions for all $32$ are released with the code.}
\label{tab:tasks}
\end{table*}

\paragraph{Spatial representative: \texttt{maze}.}
The answer is a sequence $a = (a_{0}, a_{1}, \ldots, a_{k})$ of $(\text{row}, \text{col})$ coordinates from start $a_{0}$ to goal $a_{k}$ on an $n \times n$ grid. The full instance is a grid of cells labelled wall (\texttt{\#}), path (\texttt{.}), start (\texttt{@}), goal (\texttt{*}). The partition independently masks roughly half of each agent's cells (cells become \texttt{?}) such that the union recovers the full grid. The grader extracts the proposed sequence from the dialogue, brute-forces over all $16$ origin / orientation / axis-order combinations following \citet{davidson2025collaboration}, and reports the prefix of moves that are valid in the original grid, normalised by the optimal-path length.

\paragraph{Relational representative: \texttt{relaquery}.}
The answer is a list of $q$ relational queries answered as triples $(s, \text{rel}, o)$ over a directed family DAG with \texttt{parent\_of} edges. The full instance is the union of two edge-disjoint subgraphs, one per agent (Pattern B split): every parent edge belongs to exactly one agent's view, so queries that traverse cross-partition paths require explicit information exchange. The grader scores per-query exact match against the ground-truth ancestry computed on the merged DAG, averaged over the $q$ queries.

\paragraph{CSP representative: \texttt{schedule}.}
The answer is an assignment of $m$ events to time slots respecting joint availability and time-zone constraints. The full instance is a set of participants with per-zone availability windows together with $m$ events, each requiring a subset of participants. The partition splits availability information: each agent sees the windows for one half of the participants, so feasibility for any cross-half event requires the agents to combine their views. The grader counts the number of events placed in a slot where all required attendees are available in their local zone, divided by $m$.

\section{Prompts}
\label{app:prompts}

\subsection{Protocol Prompts}
\label{app:prompts:protocol}

\paragraph{Collaboration system prompt.} Each agent in a paired rollout receives the following system prompt, which establishes the relay protocol with the partner.

\begin{promptbox}{System Message}
The user will act as an intermediary between you and another agent.\\
The user will directly forward your messages to the other agent and vice versa.\\
The user will not see or modify the messages, but will relay them as is.\\
Messages coming from the other agent will be prefixed with ``\texttt{[other agent]:}''.\\
Messages coming from the user will be prefixed with ``\texttt{[user]:}''.\\
Do not add any additional prefixes or suffixes to your own messages.
\end{promptbox}

\paragraph{Critic prompt.} In the solo modes, after producing a candidate answer, the same model receives the following critic prompt before grading.

\begin{promptbox}{User Message}
Carefully review the final solution you have provided above.\\
Ensure that it is complete, valid, and consistent with every rule and constraint of the task.\\
If needed, make adjustments to ensure your answer meets all requirements.\\
After you have finished reviewing, please submit your final solution.
\end{promptbox}

\paragraph{Grader prompt.} The grader is invoked once per rollout with the full dialogue. It extracts the final proposed answer from the natural-language exchange and returns a structured YAML object containing both the parsed answer and, where the surface representation can vary (coordinate origin, axis order, naming convention), the inferred answer schema.

\begin{promptbox}{User Message}
You will be given a dialogue produced by one or two agents attempting to solve the task above.\\
Your job is to extract the final proposed answer from the dialogue, infer the answer schema where applicable, and return the result as a YAML object.\\
Do not include any commentary or analysis outside the YAML.

\medskip
\# Answer schema\\
\texttt{<task-specific YAML schema>}

\medskip
\# Dialogue\\
\texttt{<dialogue transcript>}
\end{promptbox}

\subsection{Intervention Clauses}
\label{app:prompts:intervention}

Each clause below is appended to the collaboration system prompt during the corresponding intervention condition of Section~\ref{sec:intervention}. The \emph{all-four} condition appends all four; each \emph{no~L$k$} condition appends three.

\paragraph{L1 grounding clause.}
\begin{promptbox}{Appended to system prompt}
For every quantitative claim you make (a value, count, weight, position, edge, or mapping), cite the source: either ``from my view, \ldots'' or ``as [other agent] said in turn $N$, \ldots''. Claims without a cited source must be marked as \texttt{(assumed)} and may not appear in the final answer.
\end{promptbox}

\paragraph{L2 query mandate.}
\begin{promptbox}{Appended to system prompt}
On your second message, before making any claim about positions or connectivity that your partner might know better, ask one specific question of the form ``Please state $X$.'' If your partner cannot answer, mark $X$ as unknown. Do not act on assumed values for $X$.
\end{promptbox}

\paragraph{L3 integration block.}
\begin{promptbox}{Appended to system prompt}
Before any agent issues a final-answer proposal, that agent's preceding message must begin with an explicit integration block: ``Combined state from both views: [facts from my view + facts from partner's view + derived consequences].'' Proposals not preceded by an integration block in the same turn are invalid; the other agent should reject them and request integration.
\end{promptbox}

\paragraph{L4 re-derivation requirement.}
\begin{promptbox}{Appended to system prompt}
Before either agent issues \texttt{ACTI!}, the other agent must show recomputation work in their immediately preceding message: re-walk the path step by step, recompute the sum, re-check each constraint. ``I agree'' or ``confirmed'' without the recomputation is invalid; the proposer must reject it and ask for explicit re-derivation.
\end{promptbox}

\section{Additional Results}
\label{app:results}

\paragraph{Per-task decomposition within each category.}
The per-task decomposition (Figure~\ref{fig:taskrank}) reproduces the category effect at finer resolution. The high end of the distribution is dominated by Spatial path tasks; the low end is dominated by CSP tasks whose answer is a short assignment over named entities. Two tasks, \texttt{numberlink} and \texttt{crossword}, fall short of the solo-tractability target and their ratio gaps are reported but treated as exploratory.

\begin{figure}[h]
  \centering
  \includegraphics[width=\columnwidth]{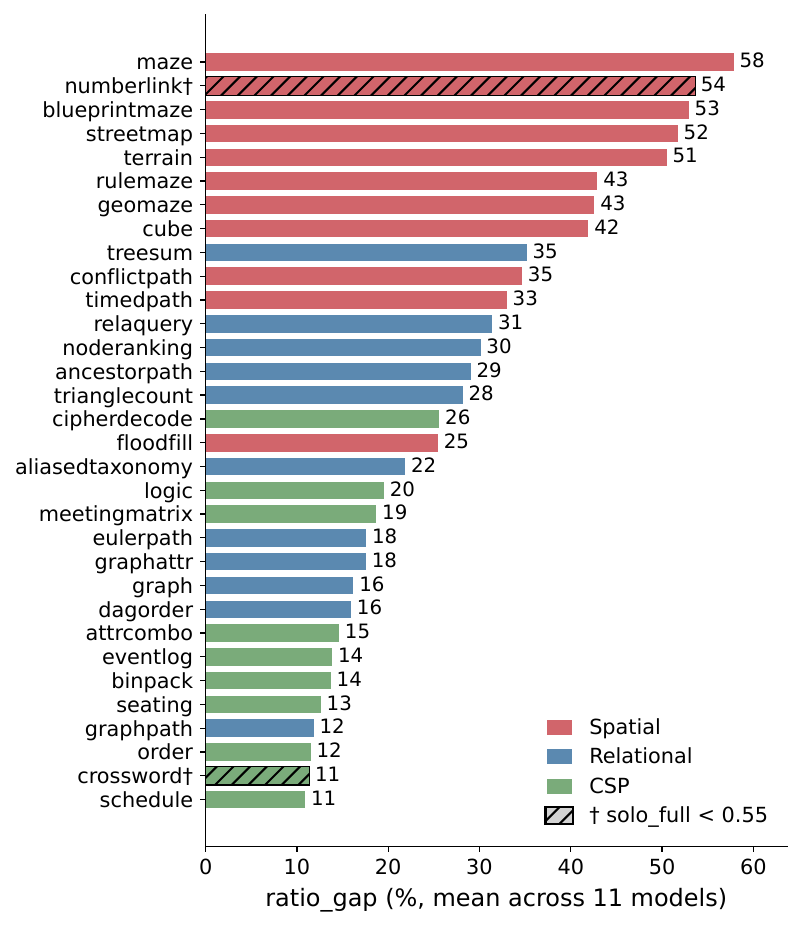}
  \caption{\textbf{Per-task ratio gap, by category.} Mean ratio gap on the homogeneous mode, by task, averaged across the eleven models. Colours indicate task category. The $\dagger$ marker flags the two tasks (\texttt{numberlink}, \texttt{crossword}) whose solo-full success rate falls short of the solo-tractability target; their ratio gaps are reported but treated as exploratory.}
  \label{fig:taskrank}
\end{figure}

\begin{figure}[h]
  \centering
  \includegraphics[width=\columnwidth]{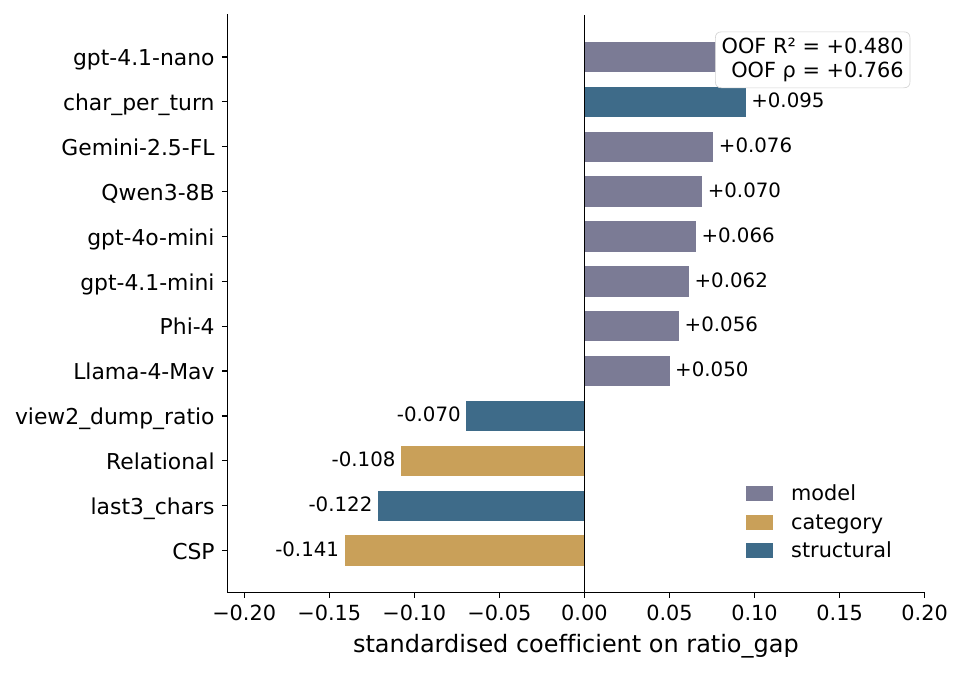}
  \caption{\textbf{Top standardised coefficients of the M1 ridge.} M1 fits the per-cell ratio gap on conversation-structural features plus model and category dummies. Model dummies are coloured separately. The model achieves out-of-fold $R^{2} = 0.480$ and Spearman $\rho = 0.766$ under group $k$-fold by task.}
  \label{fig:forest}
\end{figure}
\begin{figure*}[t]
  \centering
  \includegraphics[width=\textwidth]{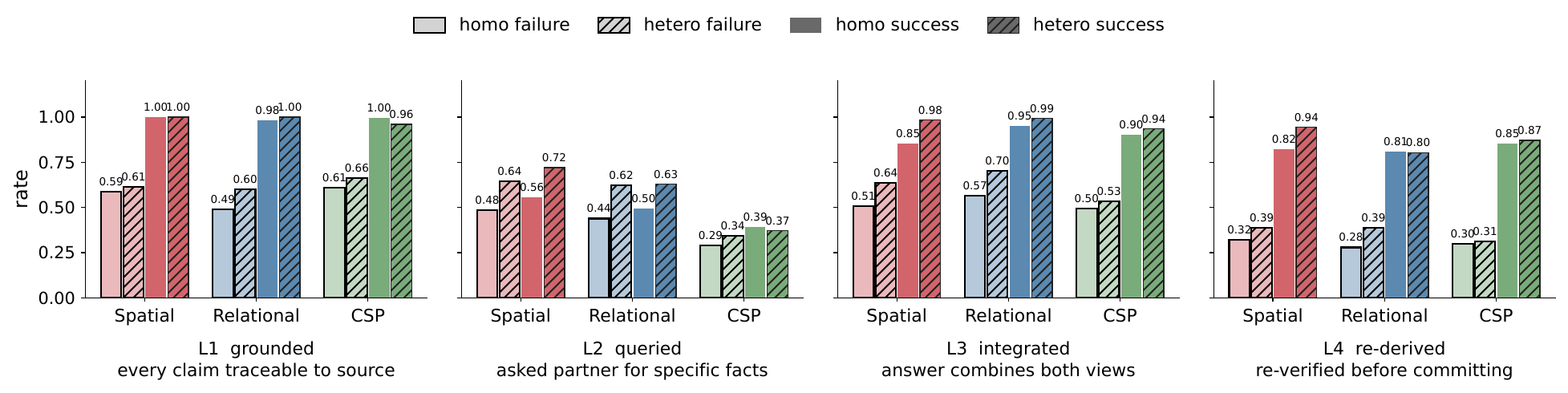}
  \caption{\textbf{Four-stage cascade in heterogeneous rollouts.} Layered onto the homogeneous bars of Figure~\ref{fig:mechanism}. Each panel reports the rate at which the corresponding cascade dimension fires, separated by category and outcome. The cascade signature continues to discriminate failure from success.}
  \label{fig:heteromech}
\end{figure*}
\paragraph{Per-cell heterogeneous heatmap.} Figure~\ref{fig:heteromat} reports the $3 \times 3$ heterogeneous-pair ratio-gap heatmap underlying the aggregated scatter of Figure~\ref{fig:additivenull}. The four off-diagonal cells cluster close to the strong-tier diagonal rather than averaging between strong and weak baselines.

\begin{figure}[h]
  \centering
  \includegraphics[width=\columnwidth]{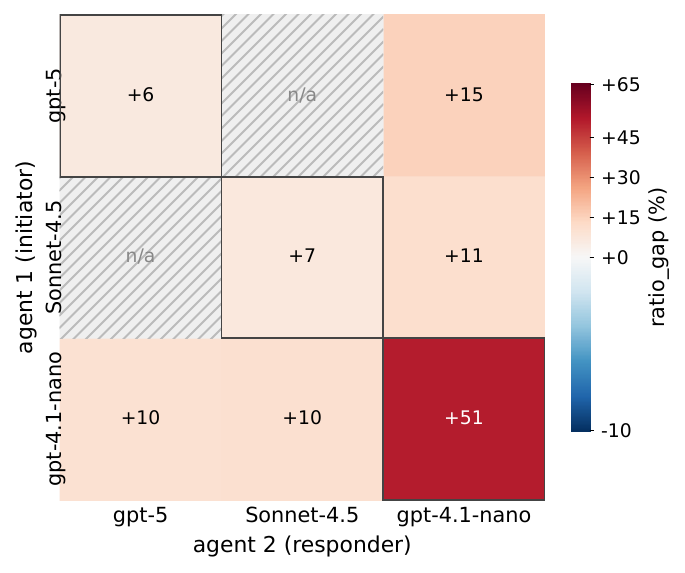}
  \caption{\textbf{Heterogeneous pair ratio gap heatmap.} Rows index the agent-1 (initiator) model, columns index the agent-2 (responder) model. Diagonal cells (boxed) reproduce the homogeneous baselines; off-diagonal cells are averaged across tasks. Hatched cells (\texttt{Sonnet-4.5} $\times$ \texttt{gpt-5}) were not run. The pair gap is pulled toward the stronger member.}
  \label{fig:heteromat}
\end{figure}

\paragraph{Cascade signatures transfer to heterogeneous rollouts.} We re-apply the four-stage cascade judge of Section~\ref{sec:mechanism} to the heterogeneous rollouts (Figure~\ref{fig:heteromech}). All four upstream stages continue to discriminate failure from success in the same direction as the homogeneous case: successful hetero pairs ground, query, integrate, and re-derive at high rates across categories, while failed hetero pairs hit each of these rates at substantially lower levels. The querying rate jumps most strongly: hetero pairs query more often than homogeneous pairs on Spatial and Relational, suggesting that pair asymmetry forces an explicit query that homogeneous pairs skip. The qualitative failure mode is the same as in the homogeneous case: failed coordination corresponds to a near-absence of the upstream cascade stages, irrespective of whether the two agents come from the same model or different ones.

\paragraph{Critic ablation.}
\label{app:critic}
We ablate the solo critic pass on \texttt{gpt-4o-mini}, a lower-tier model on which the critic has the largest plausible room to lift solo performance. Across all $32$ tasks at $50$ rollouts per cell, removing the critic prompt drops the mean solo-full score from $0.575$ to $0.569$ (Table~\ref{tab:critic_ablation}), a $0.006$-point cost. The critic therefore contributes a much smaller fraction of the solo score than the gap to homogeneous collaboration spans on this model, so the tax is not an artefact of the critic step.

\begin{table}[h]
\centering
\small
\begin{tabular}{lc}
\toprule
Solo-full setting & Mean score \\
\midrule
With critic pass & $0.5753$ \\
Without critic pass & $0.5690$ \\
\midrule
$\Delta$ (critic contribution) & $+0.0063$ \\
\bottomrule
\end{tabular}
\caption{\textbf{Critic ablation on \texttt{gpt-4o-mini}.} Mean solo-full score across the $32$ tasks at $50$ rollouts per cell, with and without the critic prompt of Appendix~\ref{app:prompts:protocol}. The critic accounts for $0.6$ percentage points of solo score.}
\label{tab:critic_ablation}
\end{table}

\section{Hyperparameters}
\label{app:hparams}

\paragraph{Generation.} Agents are sampled at temperature $0.7$ with the model's default top-$p$ and no output-token cap. The collaborative dialogue is capped at $50$ exchanges per rollout, where one exchange is a turn from each agent. Each rollout starts from an empty context, so no state leaks across rollouts. We run $50$ independent rollouts per $(\text{task}, \text{mode}, \text{model or pair})$ cell, with consecutive integer seeds controlling both instance generation and the partition into views.

\paragraph{Grading.} The grader is a fixed model (\texttt{gpt-4o-mini} in our experiments), held constant across every cell of the design. In particular, the grader does \emph{not} change when the agents do, so heterogeneous-pair comparisons are not confounded by grader-side capability differences. The grader is queried at temperature $0.5$, lower than the agent temperature so that grading variance does not dominate downstream comparisons, and we report its continuous score in $[0,1]$.

\section{API Robustness}
\label{app:robustness}
All API calls flow through a unified \texttt{chat()} wrapper with a $120$-second per-call timeout. Transient network failures trigger up to $5$ retries with exponential backoff. We separately observed that path-task transcripts occasionally trigger Azure content filters when consecutive mask cells and the start or goal markers co-occur in the visible portion; for this class of failure we apply up to $6$ retries with longer backoff, and the path-task renderer emits only the visible cells of each view to reduce false triggers.

\end{document}